\newtheorem{theorem}{Theorem}[section]
\newtheorem{lemma}[theorem]{Lemma}
\newtheorem{corollary}[theorem]{Corollary}
\newtheorem{assumption}[theorem]{Assumption}
\newcommand\hz{\widehat{z}}
\newcommand\bz{\mathbf{z}}
\newcommand\hbz{\widehat{\bz}}
\newcommand\col{\operatorname{col}}
\newcommand\bE{\mathbb{E}}
\newcommand\ox{\overline{x}}
\newcommand\obx{\overline{\mathbf{x}}}
\newcommand\teta{\tilde{\eta}}
\newcommand\onbf{\overline{\nabla \bf f}}
\newcommand\nbf{\nabla \mathbf{f}}
\newcommand\md{\mathrm{D}}
\newcommand\mB{\mathrm{B}}
\newcommand\mC{\mathcal{C}}
\newcommand{\red}[1]{{\color{red}#1}}
\definecolor{purple}{rgb}{0.6, 0.2, 0.8}
\newcommand{\hj}[1]{{\color{magenta}#1}}
\newcommand{\zjj}[1]{{\color{teal}#1}}
\title{Composite federated learning with heterogeneous data}
\name{Jiaojiao Zhang$^{\star}$,~ Jiang Hu$^\dagger$, ~Mikael Johansson$^{\star}$ \thanks{J. Zhang and M. Johansson are with the School of Electrical Engineering and Computer Science, KTH
Royal Institute of Technology, SE-100 44 Stockholm, Sweden. Email:
\{jiaoz,mikaelj\}@kth.se. J. Hu is with the Massachusetts General Hospital and Harvard Medical School, Harvard University, Boston, MA 02114 (hujiangopt@gmail.com). This work is supported in part by funding from Digital Futures and  VR under the contract 2019-05319.}}
\address{{$^{\star}$School of Electrical Engineering and Computer Science, KTH
Royal Institute of Technology}\\ $^\dagger$ Massachusetts General Hospital and Harvard Medical School, Harvard University}
\begin{document}
%
\maketitle
\begin{abstract}
We propose a novel algorithm for solving the composite Federated Learning (FL) problem.  This algorithm  manages non-smooth regularization by strategically decoupling the proximal operator and communication, and addresses client drift without any assumptions about data similarity. Moreover, each worker uses local updates to reduce the communication frequency with the server 
%
and transmits only a $d$-dimensional vector per communication round. We prove that our algorithm converges linearly to a neighborhood of the optimal solution and demonstrate the superiority of our algorithm over state-of-the-art methods in numerical experiments. 
\end{abstract}
\begin{keywords}
Composite federated learning, heterogeneous data, local update.
\end{keywords}
\vspace{-2mm}
\section{Introduction}
\vspace{-2mm}
Federated Learning (FL) is a popular machine learning framework where a server coordinates a large number of workers to train a joint model without any sharing of the local data~\cite{li2020federated}. The combination of distributed computations and potential for privacy protection makes FL an attractive approach in various applications, such as machine learning~\cite{li2020federated}, wireless networks~\cite{fang2020angle}, and  Internet of Things~\cite{boobalan2022fusion}, to mention a few.

Compared to conventional distributed learning, FL suffers from a communication bottleneck at the server and is more sensitive to 
data heterogeneity among workers \cite{li2020federated,zhang2023fedaudio}.
To improve communication efficiency, McMahan et al. introduced Federated Averaging (FedAvg) \cite{mcmahan2017communication}, where the workers perform multiple local updates before updating the server with their new local states.    When data among workers is homogeneous,   the use of local updates is a practical approach for improving communication efficiency \cite{stich2018local}. 
However, when the data distribution becomes more heterogeneous, FedAvg begins to suffer from client drift. Numerous solutions have been proposed to overcome these challenges \cite{li2020federated,karimireddy2020scaffold,pathak2020fedsplit,karimireddy2020mime}.

Most existing FL algorithms focus on smooth problems. However, real-world applications often call for non-smooth objective functions,  for example, when we want to find a solution within a restricted domain or encourage  specific solution properties such as sparsity or low-rank \cite{yuan2021federated}. 
This motives us to address composite FL problems on the form 
\begin{equation}\label{eqn:basic_opt}
\begin{aligned}
    \operatorname*{minimize}_{x\in\mathbb{R}^d} \; &f(x)+g(x). 
\end{aligned}
\end{equation}
Here, $x\in \mathbb{R}^d$ is the decision vector (model parameters in a machine learning application), $f(x):= \frac{1}{n}\sum_{i=1}^nf_i(x)$ is the average data loss of the $n$ workers, and $g$ is a convex but possibly non-smooth regularizer. 
To make the data dependence explicit, we let 
$\md_i = \bigcup_{l=1}^{m_i} \md_{il}$ with $\md_{i1}, \dots, \md_{im_i}$ being the $m_i$  data points of worker $i$, 
$f_{il} (x;\md_{il})$ be the sample loss of worker $i$ associated with the data point $\md_{il}$, and
$f_i(x):=\tfrac{1}{m_i} \sum_{\md_{il}\in \md_i} f_{il} (x;\md_{il})$. 
%
Note that we do not make any assumptions about similarity between the datasets $\md_i$.
%

Solving \eqref{eqn:basic_opt} in the context of FL presents several challenges.  Federated Mirror Descent (Fed-Mid), which is a natural extension of FedAvg that replaces the local
stochastic gradient descent (SGD) steps in FedAvg with proximal SGD \cite{beck2009fast}, faces the ``curse of primal averaging'' \cite{yuan2021federated}. 
To illustrate this effect, consider the case when $g$ is the $\ell_1$-norm. Although each worker generates a sparse local model after its local updates,  averaging the local models at the server typically results in a solution that is no longer sparse.
%
Another difficulty of solving \eqref{eqn:basic_opt} in the FL setting arises due to the coupling between the proximal operator and communication. If the server averages local models that have been updated using proximal operators, it is no longer possible to directly obtain the average of the gradients across all the workers due to the nonlinearity of the general proximal operators. This makes both the algorithm design and the analysis more challenging.
\vspace{-6mm}
\subsection{Contribution} 
\vspace{-2mm}
We propose a novel algorithm for solving the composite FL problem. A key innovation of our algorithm is that it decouples  the proximal operator evaluation and the  communication to efficiently handle non-smooth regularization. Moreover, each worker uses local updates to reduce the communication frequency with the server and sends only a single $d$-dimensional vector per communication round, while addressing the issue of client drift efficiently. Without any assumptions on data similarity, we prove that our algorithm converges linearly up to a neighborhood of the optimal solution. 

\vspace{-2mm}
\subsection{Related Work}
\vspace{-1mm}
{\bf Smooth FL Problems.}
FedAvg was originally proposed in \cite{mcmahan2017communication}, and a general analysis for the homogeneous data case was carried out in \cite{stich2018local}. %
However, when data is heterogeneous, the use of local updates in FedAvg introduces client drift, which limits its practical usefulness. The client drift was analyzed theoretically  under a notion of bounded heterogeneity in~\cite{li2019convergence,zhang2021fedpd}, and  several variants of FedAvg have been proposed to reduce or eliminate the drift \cite{li2020federated,karimireddy2020scaffold,pathak2020fedsplit,karimireddy2020mime}.  
For example, SCAFFOLD \cite{karimireddy2020scaffold} and MIME \cite{karimireddy2020mime} tackle client drift by designing control variates to correct the local direction during the local updates. A drawback of these approaches is their need to communicate also the control variates, which increases the overall communication cost.  Fedsplit \cite{pathak2020fedsplit}, on the other hand, adopts the Peaceman-Rachford splitting scheme \cite{he2014strictly} to address the client drift through a consensus reformulation of the original problem where each worker only exchanges one local model per communication round. However, none of the mentioned algorithms can handle the composite FL problem.

\noindent
{\bf Composite FL Problems.} 
Compared to the abundance of studies on smooth FL problems, there are few studies for general composite problems. One attempt to address this gap is the Federated Dual Averaging (FedDA) introduced in~\cite{yuan2021federated}. In this method, each worker performs dual averaging~\cite{nesterov2009primal} during the local updates,  while the server averages the local models in the dual space and applies a proximal step. Convergence is established for general loss functions by assuming bounded gradients. However, under data heterogeneity, the convergence analysis is limited to quadratic loss functions. The Fast Federated Dual Averaging (Fast-FedDA) algorithm~\cite{bao2022fast} uses weighted summations of both past  gradient information and past model information during the local updates. However, it comes with an additional communication overhead. While the convergence of Fast-FedDA is established for general losses, it still requires the assumption of bounded heterogeneity.
The work \cite{tran2021feddr} introduces Federated Douglas-Rachford (FedDR), which avoids the assumption of bounded heterogeneity. A follow-up of FedDR is FedADMM, proposed in \cite{wang2022fedadmm}, which uses  FedDR to solve the dual problem of \eqref{eqn:basic_opt}.  
In both FedDR and FedADMM, the local updates implement an inexact evaluation of the proximal operator of the smooth loss with adaptive accuracy. However, to ensure convergence, the accuracy needs to increase by iteration, resulting in an impractically large number of local updates.

{\bf Notation.} 
We let $\|\cdot\|$ be  $\ell_2$-norm and $\|\cdot\|_1$ be $\ell_1$-norm. For positive integers $d$ 
and $n$, we let $I_d$  be the $d\times d$ identity matrix, $1_n$ 
be the all-one $n$-dimensional column vector,
and $[n] = \{1,\ldots, n\}$. We use $\otimes$ to denote the Kronecker product. For a set   $\mB$, we use $|\mB|$ to denote the cardinality.  For a  set $\mC$, we use 
$I_{\mC}(x)$ to denote the indicator function, where $I_{\mC}(x)=0$ if $x\in \mC$ and $I_{\mC}(x)=\infty$ otherwise. For a  convex function $g$, we use  $\partial g$ to denote the subdifferential. For a random variable $v$,  we use $\mathbb{E}[v]$ to denote the expectation and $\mathbb{E}[v|\mathcal{F}]$ to denote the expectation given event $\mathcal{F}$. For vectors $x_1,\ldots,x_n\in\mathbb{R}^d$, we let  $\col\{x_i\}_{i=1}^{n}=[x_1;\ldots;x_n]\in\mathbb{R}^{nd}$. Specifically, for a vector $\ox\in \mathbb{R}^d$, we let  $\col\{\ox\}_{i=1}^{n}=[\ox;\ldots;\ox] \in \mathbb{R}^{nd}$. For a vector $\omega$ and a positive scalar $\teta$, we let  $P_{\teta g}(\omega)=\arg\min_{u\in \mathbb{R}^d} \teta g(u)+\frac{1}{2}\|\omega-u \|^2 $. Specifically, for $\col\{ \omega_i\}_{i=1}^n$,  we let $P_{\teta g}(\col\{\omega_i\}_{i=1}^n)=\col\{ P_{\teta g}(\omega_i)\}_{i=1}^n$.   
\vspace{-4mm}
\section{Proposed Algorithm}
\vspace{-2mm}
The per-worker implementation of the proposed algorithm is given in Algorithm \ref{alg-fl}. 
In general, our algorithm involves communication rounds indexed by $r$ and local updates indexed by $t$. In every round $r$, workers perform $\tau$ local update steps before updating the server. 
During the local updates, each worker $i$ maintains the local model state before and after the application of the proximal mapping. We call these models pre-proximal, denoted $\hz_{i, t}^r$, and post-proximal, denoted $z_{i, t}^r$. The local mini-batch stochastic gradient with size $b$ is computed at the post-proximal local model $z_{i, t}^r$. Following this, a client-drift correction term $c_i^r=\frac{1}{\eta_g\eta\tau}( P_{\teta g}(\ox^{r-1})-{\ox^{r}})$ $- \frac{1}{\tau} \sum_{t=0}^{\tau-1}$ ${\nabla f}_i(z_{i,t}^{r-1};\mB_{i,t}^{r-1}) $ is added to the update direction for the pre-proximal local model $\hz_{i, t}^r$. 
At the end of the round, the final pre-proximal model, $ \hz_{i,\tau}^r$, is transmitted to the server.  

At the $r$-th communication round, the server also manipulates two models: a pre-proximal global model $\ox^r$ and a post-proximal global model $P_{\teta g}(\ox^r)$. 
The server calculates the average of pre-proximal local models $\hz_{i,\tau}^r$ and uses the average information to update the post-proximal global model $P_{\teta g}(\ox^r)$,  ensuring that the server-side algorithm behaves similarly to a centralized proximal SGD approach. 
Finally, the server broadcasts the pre-proximal global model $\ox^{r+1}$ to all workers that use it to update their correction terms $c_i^{r+1}$. 

As shown in Appendix~\ref{app:prf-illustration}, the proposed algorithm can be described mathematically by the following iterations
\vspace{-2mm}
\begin{equation}\label{eqn:illustration}
\hspace{-3mm}\left\{
\begin{aligned}
    \hbz_{t+1}^r
    & =\hbz_t^r\!-\!\eta \Big({\nabla \mathbf{f}}\left(\bz_t^r ; \mB_{t}^r\right)+\frac{1}{\tau} \sum_{t=0}^{\tau-1} \onbf\left(\bz_t^{{r-1}} ; \mB_{t}^{r-1} \right)\\
    & \quad- \frac{1}{\tau} \sum_{t=0}^{\tau-1} {\nabla \mathbf{f}} \left(\bz_t^{{r-1}} ; \mB_{t}^{r-1}\right)  \Big), ~\forall t\in[\tau]-1, \\
    \bz^r_{t+1}&=P_{(t+1)\eta g}\left(\hbz^{r}_{t+1} \right), ~\forall t\in[\tau]-1,\\
    \obx^{r+1}
    &= P_{\teta g} (\obx^{r})-{\eta_g}\eta \sum_{t=0}^{\tau-1} \overline{\nabla \mathbf{f}}\left(\bz_t^r ; \mB_{t}^r\right), 
\end{aligned}
\right.
\end{equation}
where  $\bz_t^r=\col\{z_{i,t}^r\}_{i=1}^n$, $\hbz_t^r=\col\{ \hz_{i,t}^r\}_{i=1}^n$, $\obx^r=\col\{\ox\}_{i=1}^{n}$,  
$\nabla \mathbf{f}(\bz_t^r;\mB_{t}^r) =\col\{\nabla f_{i}(z_{i,t}^r;\mB_{i,t}^r)\}_{i=1}^n$,  and  $ \onbf(\bz_t^r;\mB_{t}^r)$ $=\col\left\{ \frac{1}{n} {\sum_{i=1}^{n}\!\nabla f_{i}(z_{i,t}^r;\mB_{i,t}^r)}\right\}_{i=1}^n $. When $r=1$, we set ${\nabla f_i} \left(z_{i,t}^{{0}}; B_{i,t}^{0}\right) =0_d$ for all $t\in [\tau]-1$, which implies that $\frac{1}{\tau} \sum_{t=0}^{\tau-1} \onbf\left(\bz_t^{0} ; \mB_{t}^0 \right)- \frac{1}{\tau} \sum_{t=0}^{\tau-1} {\nabla \mathbf{f}} \left(\bz_t^{0} ; \mB_{t}^{0}\right)=0_{nd}$.  

Note that the updates of the post-proximal local models $\bz_{t+1}^r$ use 
the parameter 
$(t+1)\eta$ for computing the proximal operator $P_{(t+1)\eta g}{\left(\hbz_{t+1}^r\right)}$. This is similar to using a decaying step-size in stochastic gradient methods and 
significantly improves the practical performance of Algorithm \ref{alg-fl}, as we will demonstrate in numerical experiments. We give a more detailed motivation for this update in Appendix~\ref{app:line 11}. 

Our algorithm has the following additional features.

\noindent
{\bf Decoupling proximal operator evaluation and communication.}
Each worker $i$ manipulates a pre-proximal local model $\hz_{i,t}^r$ during the local updates and sends  $\hz_{i,\tau}^r$  to the server after $\tau$ local updates. The algorithm decouples proximal operator evaluation and communication in the sense that the server,  by averaging $\hz_{i,\tau}^r$,   can directly obtain the average of the local gradients across the workers, $  \sum_{t=0}^{\tau-1}\frac{1}{n} {\sum_{i=1}^{n} \!\nabla f_{i}(z_{i,t}^r;\mB_{i,t}^r)}$; cf. the last step of \eqref{eqn:illustration}. This is confirmed in the first step of \eqref{eqn:illustration}, indicating that the average correction term among all workers is zero, i.e., $ \left(\tfrac{1_n 1_n^T}{n}\!\otimes\! I_d \right) \Big(\tfrac{1}{\tau} \sum_{t=0}^{\tau-1} \onbf\left(\bz_t^{r}; \!\mB_{t}^r \right)- \frac{1}{\tau} \sum_{t=0}^{\tau-1} {\nabla \mathbf{f}} \left(\bz_t^{r} ; \!\mB_{t}^{r}\right)\Big)\!=\!0_{nd}$ for all $r\in [R]$.  In comparison, if each worker $i$ naively uses proximal SGD with client-drift correction during the local updates, i.e.,   $z_{i,t+1}^r
=P_{(t+1)\eta g} \big(z_{i,t}^r-\eta ({\nabla f}_i(z_{i,t}^r;\mB_{i,t}^r) $ $+ 	\frac{1}{\eta_g\eta\tau}( P_{\teta g}(\ox^{r-1})-\ox^{r}  )$ $-\frac{1}{\tau} \sum_{t=0}^{\tau-1} $  ${\nabla f}_i$ $(z_{i,t}^{r-1};\mB_{i,t}^{r-1}) ) \big)$,  and  sends $ z_{i,\tau}^r$ to the server after $\tau$ local updates,  then the server can no longer extract the average gradient due to the nonlinearity of proximal operator, and the resulting scheme becomes much more difficult to analyze. 

\noindent
{\bf Overcoming client drift.} From  \eqref{eqn:illustration}, we can better understand the role of the correction term. In fact, each worker $i$ utilizes $\tfrac{1}{\eta_g\eta\tau}\left( P_{\teta g}(\ox^{r-1})-\ox^{r} \right)$, 
which is equal to $\tfrac{1}{n \tau} \sum_{t=0}^{\tau-1} \sum_{i=1}^{n}$ $\nabla f_i(z_{i,t}^{r-1};\mB_{i,t}^{r-1})$,  
to introduce the local gradient information of the other workers and then replaces the previous $\tfrac{1}{\tau} \sum_{t=0}^{\tau-1} {\nabla f_i} \left(z_{i,t}^{r-1}; \mB_{i,t}^{r-1}\right)$ with the new $\nabla f_i(z_{i,t}^{r};\mB_{i,t}^{r})$. Intuitively, during the local updates, each worker $i$ approximately minimizes $\frac{1}{n}\sum_{i=1}^n f_i+g$ rather than $f_i + g $ itself, which is crucial for overcoming client drift.
Notably, only a $d$-dimensional vector is exchanged per communication round per worker, making the communication lightweight. 

\vspace{-2mm}
\begin{algorithm}[h]
\caption{Proposed Algorithm}
\label{alg-fl}
\begin{algorithmic}[1]
    \STATE $ \textbf{Input:}$ $R$, $ \tau$, $\eta$, $\eta_g$, and $\ox^{1}$
    {\STATE Set $\teta=\eta \eta_g \tau$}
   {\STATE Set $c_i^1=0_d$ for all $i\in[n]$}
    \FOR {$r = 1, 2, \ldots, R$ }
    \STATE {\bf Worker $i$}
    \STATE Set $\hz_{i, 0}^r=P_{\teta g}(\ox^r)$ and $z_{i,0}^r=P_{\teta g}(\ox^r)$
    \FOR {$ t= 0, 1, \ldots, \tau-1$ }
    \STATE Sample a subset data $\mB_{i,t}^r \subseteq \md_i $ with $|\mB_{i,t}^r |=b$
    \STATE Compute  
    $ \nabla f_i(z_{i,t}^r;\mB_{i,t}^r)=\tfrac{1}{b} \sum_{\md_{il}\in \mB_{i,t}^r} \nabla f_{il}(z_{i,t}^r;\md_{il})$
    \STATE Update 
    $
    \begin{aligned}
        &\hz_{i,t+1}^r= 
        \hz_{i,t}^r-\eta \Big({\nabla f}_i(z_{i,t}^r;\mB_{i,t}^r)+c_i^r \Big)
    \end{aligned}
    $
    \STATE Update $z_{i,t+1}^r=P_{(t+1)\eta g}{\left(\hz_{i,t+1}^r\right)}$
    \ENDFOR
    \STATE Send $ \hz_{i,\tau}^r$ to the server
    \STATE { \bf Server}
    \STATE Update $\ox^{r+1}\!=\!P_{\teta g}(\ox^r)\!+\!\eta_g( \tfrac{1}{n} \sum_{i=1}^{n} \hz_{i,\tau}^r     \!-\!P_{\teta g}(\ox^r))$ 
    \STATE Broadcast $\ox^{r+1}$  to all the workers	
    \STATE {\bf Worker $i$}
    \STATE Receive $\ox^{r+1}$ from the server
    \STATE Update $c_i^{r+1}=\frac{1}{\eta_g\eta\tau}( P_{\teta g}(\ox^{r})-\ox^{r+1})-\frac{1}{\tau} \sum_{t=0}^{\tau-1}{\nabla f}_i(z_{i,t}^{r};\mB_{i,t}^{r})$
    \ENDFOR
    \STATE \textbf{Output:} $P_{\teta g}(\ox^{R+1})$
\end{algorithmic}
\end{algorithm}

\section{ Analysis} \label{sec:analysis}

\vspace{-2mm}
In this section, we prove the convergence of Algorithm \ref{alg-fl}. All the proofs can be found in Appendix \ref{section-app}.   To facilitate the analysis, we impose the following assumptions on $f_i$ and $g$ \cite[Theorem 5.8 and Theorem 5.24]{beck2017first}.  
\vspace{-1mm}
\begin{assumption}\label{asm-convex} 
The loss function $f_i: \mathbb{R}^{d} \mapsto \mathbb{R}$ is both $\mu$-strongly convex and $L$-smooth.
\end{assumption}
\vspace{-1mm}
\begin{assumption}\label{assm:g}
The function $g: \mathbb{R}^{d} \mapsto \mathbb{R} \cup \infty $ is proper closed convex, but not necessarily smooth. In addition, we assume $g$ satisfies one of the following conditions:
    \begin{itemize}
    \vspace{-1mm}
        \item for any  $x\in \mathbb{R}^d$ and any $\widetilde{\nabla} g(x)\in \partial g(x)$, there exists a constant $0<B_g<\infty$ such that 
        $\|\widetilde{\nabla} g(x)\|\le B_g$.
         \vspace{-1mm}
        \item $g$ is an indicator function of a compact convex set.
    \end{itemize}
\end{assumption}
To handle the stochasticity caused by random sampling $ \mB_{i,t}^r$,  we denote $\mathcal{F}_t^r$ as the event generated 
by  $\{ \xi_{i,\tilde{t}}^{\tilde{r}},~ |~ i\in [n]; \tilde{r}\in [r]; \tilde{t} \in [t]-1\}$. 
We make the following assumptions regarding the stochastic gradients.

\begin{assumption}\label{asm-sgd}
The stochastic gradients of each worker $i$ satisfy
\vspace{-2mm}
\begin{equation}
    \begin{aligned}
        &\bE\left[\nabla f_i(z_{i,t}^{r}; \mB_{i,t}^r) |  \mathcal{F}_t^r\right]=\nabla f_i(z_{i,t}^{r}), \\
        &\bE \left[\left\|\nabla f_i(z_{i,t}^{r}; \mB_{i,t}^r) -\nabla f_i(z_{i,t}^{r})  \right\|^2 |  \mathcal{F}_t^r\right] \le {\sigma^2}/{b}. 
    \end{aligned}
\end{equation}
\end{assumption}
\noindent
To measure the optimality, we define 
the Lyapunov function
\vspace{-1mm}
\begin{equation}\label{eq:def-Lyapunov}
\Omega^{r}:=  \left\|P_{\teta g}(\ox^{r})-x^{\star}\right\|^2+ \|\Lambda^{r}-\overline{\Lambda}^{r}\|^2/n, 
\vspace{-2mm}
\end{equation}
where $ \Lambda^r:={\eta}( \tau\nabla \mathbf{f}({P_{\teta g}(\obx^r)})+ \sum_{t=0}^{\tau-1} \overline{\nbf}(\bz_t^{{r-1}} ; \mB_{t}^{r-1})-  \sum_{t=0}^{\tau-1} \nbf (\bz_t^{{r-1}} ; \mB_{t}^{r-1}))$,  $\overline{\Lambda}^r\!:=\!\col\left\{ \tfrac{1}{n} \sum_{i=1}^{n}\Lambda_i^r \right\}_{i=1}^n$, and $x^{\star}$ is the optimal solution to \eqref{eqn:basic_opt}. The first component $\left\|P_{\teta g}(\ox^{r})-x^{\star}\right\|^2$ in the Lyapunov function $\Omega^r$ serves to bound the optimality of the global model $P_{\teta g}(\ox^{r})$. The second component  is used to bound the client-drift error, which measures how far the local models $\{z_{i,\tau}^r\}_{i}$ are from the common initial point $P_{\teta g}(\ox^r)$ after the local updates. This drift error can be controlled by the inconsistency of the local directions accumulated through the local updates, as characterized by $\|\Lambda^{r}-\overline{\Lambda}^{r}\|^2/n$. 
We derive the following theorem.  
\vspace{-2mm}
\begin{theorem}\label{thm: g}
Under Assumptions \ref{asm-convex}, \ref{assm:g}, and 
\ref{asm-sgd},  if the step sizes satisfy 
\vspace{-2mm}
\begin{equation}\label{eq:stepsize}
    \begin{aligned}
     \teta:=\eta\eta_g \tau \le \mu/(150L^2), ~ \eta_g= \sqrt{n},
    \end{aligned}
\end{equation} 
then the sequence $\{\Omega^r\}_r$ generated by Algorithm \ref{alg-fl}   satisfies
\begin{equation*}
    \begin{aligned}
    \vspace{-2mm}
\bE\left[\Omega^{R+1}\right]\le \left(1-\frac{\mu\teta}{3}\right)^R\bE[\Omega^{1}] +\frac{30\eta\eta_g}{\mu}\frac{\sigma^2}{nb}+\frac{21\tau\eta \eta_g}{\mu n }  B_g^2.
    \end{aligned}
\end{equation*}
\end{theorem}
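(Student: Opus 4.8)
The plan is to establish a one-step Lyapunov contraction of the form $\bE[\Omega^{r+1}]\le(1-\tfrac{\mu\teta}{3})\bE[\Omega^{r}]+e_\sigma+e_g$, where $e_\sigma=O(\teta\sigma^2/(\mu nb))$ is a stochastic-noise floor and $e_g=O(\teta B_g^2/(\mu n))$ is a proximal/subgradient floor, and then to unroll this recursion over $R$ rounds and bound the geometric sum $\sum_{k\ge0}(1-\tfrac{\mu\teta}{3})^k\le 3/(\mu\teta)$ to convert the per-step floors into the stated additive constants. To analyze the first component, write $y^r:=P_{\teta g}(\ox^r)$; the last line of \eqref{eqn:illustration} shows the server performs the inexact proximal-gradient step $\ox^{r+1}=y^r-\teta\,\widehat g^r$ with aggregate gradient $\widehat g^r:=\tfrac{1}{\tau n}\sum_{t=0}^{\tau-1}\sum_{i=1}^n\nabla f_i(z_{i,t}^r;\mB_{i,t}^r)$ and $\teta=\eta\eta_g\tau$, so that $y^{r+1}=P_{\teta g}(\ox^{r+1})$. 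Since $x^\star$ is a fixed point of the exact map $\omega\mapsto P_{\teta g}(\omega-\teta\nabla f(\omega))$ and $P_{\teta g}$ is nonexpansive, I would bound $\|y^{r+1}-x^\star\|^2\le\|(y^r-\teta\nabla f(y^r))-(x^\star-\teta\nabla f(x^\star))-\teta(\widehat g^r-\nabla f(y^r))\|^2$. Under Assumption~\ref{asm-convex} the gradient map contracts, $\|(y^r-\teta\nabla f(y^r))-(x^\star-\teta\nabla f(x^\star))\|\le(1-\mu\teta)\|y^r-x^\star\|$ for $\teta\le 1/L$, and a Young split leaves a perturbation governed by $\teta^2\|\widehat g^r-\nabla f(y^r)\|^2$.

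Next I would control the aggregate-gradient error $\widehat g^r-\nabla f(y^r)$ by splitting it into a conditionally zero-mean stochastic part and a deterministic client-drift part. By Assumption~\ref{asm-sgd} the stochastic part is unbiased, and because $\widehat g^r$ averages mini-batch gradients over both the $n$ workers and the $\tau$ local steps, its (conditional) variance is bounded by $O(\sigma^2/(nb))$ after using the tower property and independence across workers; this is the origin of the $\sigma^2/(nb)$ floor. The drift part equals $\tfrac{1}{\tau n}\sum_{t,i}(\nabla f_i(z_{i,t}^r)-\nabla f_i(y^r))$, which by $L$-smoothness is bounded by $\tfrac{L}{\tau n}\sum_{t,i}\|z_{i,t}^r-y^r\|$, so the whole perturbation reduces to bounding the local drift $\|z_{i,t}^r-y^r\|$.

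To bound the drift I would unroll the local iterations from $z_{i,0}^r=\hz_{i,0}^r=y^r$. The prox optimality condition gives $\hz_{i,t}^r-z_{i,t}^r\in t\eta\,\partial g(z_{i,t}^r)$, so under Assumption~\ref{assm:g} (either bounded subgradients or a compact-set indicator) $\|\hz_{i,t}^r-z_{i,t}^r\|\le t\eta B_g\le\tau\eta B_g=\teta B_g/\eta_g$; with $\eta_g=\sqrt n$ this squared quantity is $\teta^2 B_g^2/n$, the source of the $B_g^2/n$ floor. Unrolling the pre-proximal recursion and substituting the correction $c_i^r=\tfrac{1}{\tau}\sum_{t=0}^{\tau-1}\big(\tfrac1n\sum_{j}\nabla f_j(z_{j,t}^{r-1};\mB_{j,t}^{r-1})-\nabla f_i(z_{i,t}^{r-1};\mB_{i,t}^{r-1})\big)$ (which follows from the server step and averages to zero over $i$), I would express $\hz_{i,t}^r-y^r$ through the per-worker direction inconsistency that is measured precisely by $\Lambda^{r}-\overline\Lambda^{r}$. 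This yields $\|z_{i,t}^r-y^r\|^2\lesssim \eta^2\|\Lambda^r-\overline\Lambda^r\|^2+(\text{noise})+(\eta t B_g)^2$, linking the drift in the first component to the second Lyapunov component.

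Finally I would derive a recursion for $\|\Lambda^{r+1}-\overline\Lambda^{r+1}\|^2/n$. Using the definition of $\Lambda^{r+1}$ (built from the round-$r$ local gradients and $\nabla f(y^{r+1})$), $L$-smoothness, and the same drift and noise bounds, I expect this term to be contractive up to a multiple of $\teta^2 L^2\|y^r-x^\star\|^2$ plus the noise and $B_g$ floors. Adding the two recursions with suitable positive weights and invoking the step-size condition $\teta\le\mu/(150L^2)$ and $\eta_g=\sqrt n$, every cross term of order $\teta L^2$ is forced small enough to be absorbed, degrading the clean per-round rate $(1-\mu\teta)$ to $(1-\tfrac{\mu\teta}{3})$; unrolling and summing the geometric series then gives the claimed bound. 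The main obstacle is exactly this two-way coupling: the optimality error feeds on the drift while the drift recursion feeds on the optimality gap, so the weights in the combined Lyapunov function and the numerical constant $150$ must be chosen to render the resulting $2\times2$ linear system of recursions jointly contractive.
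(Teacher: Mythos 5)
Your overall architecture matches the paper's: the same two-component Lyapunov function, local drift controlled through the prox displacement $\|\hz_{i,t}^r-z_{i,t}^r\|\le t\eta B_g$ and the direction inconsistency $\|\Lambda^r-\overline{\Lambda}^r\|$, and a coupled two-dimensional recursion made jointly contractive by \eqref{eq:stepsize}. The one genuinely different ingredient is the server-step descent: you use the fixed-point property $x^\star=P_{\teta g}(x^\star-\teta\nabla f(x^\star))$, nonexpansiveness of $P_{\teta g}$, and contraction of the gradient map $I-\teta\nabla f$, whereas the paper (Lemma \ref{lem:x-x*-4.5}) runs a three-point-property/function-value argument. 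Your route is more elementary and is valid under Assumption \ref{asm-convex} with the given step sizes. Note, however, that the paper's route produces a \emph{negative} multiple of $\|P_{\teta g}(\ox^{r+1})-P_{\teta g}(\ox^r)\|^2$ in \eqref{eq:x-bar-x*-24}, which is used precisely to cancel the positive multiple of the same quantity appearing in the $\Lambda$-recursion \eqref{eq:Lambda-19}; your argument yields no such negative term, so you must absorb it differently (its coefficient $2\eta^2\tau^2L^2=2\teta^2L^2/n$ is small enough to fold into the contraction, but this step has to be carried out).

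The genuine gap is in the noise bookkeeping, and as written it prevents you from reaching the stated constants. First, your per-step floors $e_\sigma=O(\teta\sigma^2/(\mu nb))$ and $e_g=O(\teta B_g^2/(\mu n))$ are already the post-summation quantities; feeding them through $\sum_{k}(1-\mu\teta/3)^k\le 3/(\mu\teta)$ gives $O(\sigma^2/(\mu^2nb))+O(B_g^2/(\mu^2n))$, not the theorem. The per-round floors must be $O(\teta^2\sigma^2/(n\tau b))$ and $O(\teta^2B_g^2/n)$, as in \eqref{eqn:Lypnv}. Second, and more substantively, your variance bound for $\widehat g^r$ averages only over the $n$ workers, giving $\sigma^2/(nb)$; to recover the theorem's $\tfrac{30\eta\eta_g}{\mu}\tfrac{\sigma^2}{nb}$ (recall $\eta\eta_g=\teta/\tau$) you need $\bE\|\widehat g^r-\tfrac{1}{n\tau}\sum_{t,i}\nabla f_i(z_{i,t}^r)\|^2\le \sigma^2/(\tau nb)$, i.e., averaging over the $\tau$ local steps as well. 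This is the martingale/tower argument of \eqref{eq:sigma}: cross terms across local steps vanish conditionally. With only $\sigma^2/(nb)$, your final residual is inflated by a factor of $\tau$. Relatedly, the noise must enter additively through conditional unbiasedness, not through the Young split you apply to the drift, or it picks up an extra $1/(\mu\teta)$ factor. Finally, a small missing step: the drift bound produces $\|\nabla f(P_{\teta g}(\ox^r))\|^2$ via $\overline{\Lambda}^r=\eta\tau\,\overline{\nabla \mathbf{f}}(P_{\teta g}(\obx^r))$, and to fold this into the $B_g^2$ floor you need $\|\nabla f(x^\star)\|\le B_g$, which follows from $0\in\nabla f(x^\star)+\partial g(x^\star)$ and is invoked by the paper at the end of its proof.
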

\noindent
Theorem \ref{thm: g} shows that  $\bE\left[\Omega^{R+1}\right]$ converges linearly to a residual error order $\mathcal{O}  ({\eta \eta_g\sigma^2}/{(\mu n b)}+ {  \tau\eta \eta_g B_g^2}/{(\mu n)} )$. The first term in the residual is controlled by the stochastic gradient variance, while the second term in the residual is due to the bound of the subgradient $\partial g$.  
%
Notably, in the special case when  $g(x)=I_{\mC}(x)$ and $\mC$ is a convex compact set, we can get rid of $B_g$ in the residual,  under the following assumption. 
\begin{assumption}\label{asmp:Ic}
When $g(x)=I_{\mC}(x)$, for the optimal solution $x^{\star}$, it holds that $\nabla f(x^{\star})=0$.
\vspace{-2mm}
\end{assumption}
Assumption \ref{asmp:Ic} is, for example, satisfied when the optimal solution $x^{\star}$ is in the interior of the convex set $\mC$. 
\begin{corollary}\label{coro-Ic}
Under Assumptions \ref{asm-convex}--\ref{asm-sgd}, and \ref{asmp:Ic},  if the step sizes satisfy  \eqref{eq:stepsize}, 
then the sequence $\{\Omega^r\}_r$ generated by Algorithm \ref{alg-fl}   satisfies
\begin{equation*}
\begin{aligned}   
\bE\left[\Omega^{R+1}\right]\le \left(1-\frac{\mu\teta}{3}\right)^R\bE[\Omega^{1}] +\frac{30\eta\eta_g}{\mu}\frac{\sigma^2}{nb}.
\end{aligned}
\end{equation*}
\end{corollary}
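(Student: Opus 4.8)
The plan is to revisit the proof of Theorem \ref{thm: g} and track every place where the subgradient bound $B_g$ is invoked, then show that each such contribution vanishes once $g=I_{\mC}$ and Assumption \ref{asmp:Ic} are in force. Two structural facts drive the argument. First, for the indicator of a convex set the proximal map is a Euclidean projection that is independent of the proximal parameter: $P_{\alpha g}(\cdot)=\Pi_{\mC}(\cdot)$ for every $\alpha>0$. Second, the optimality condition for \eqref{eqn:basic_opt} reads $-\nabla f(x^{\star})\in\partial I_{\mC}(x^{\star})$, so under Assumption \ref{asmp:Ic} the binding subgradient of $g$ at the optimum is $\widetilde{\nabla}g(x^{\star})=-\nabla f(x^{\star})=0$; equivalently, the reference point $x^{\star}-\teta\nabla f(x^{\star})$ that certifies $P_{\teta g}(x^{\star}-\teta\nabla f(x^{\star}))=x^{\star}$ collapses to $x^{\star}\in\mC$ itself, which already satisfies $\Pi_{\mC}(x^{\star})=x^{\star}$.

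First I would isolate the two uses of $B_g$ in the original argument. (a) In the client-drift / $\Lambda$-consensus estimate, the local models obey $z_{i,t+1}^r=P_{(t+1)\eta g}(\hz_{i,t+1}^r)$ while the common anchor is $P_{\teta g}(\ox^r)$ with $\teta=\eta\eta_g\tau$; bounding $\|z_{i,t}^r-P_{\teta g}(\ox^r)\|$ therefore generates proximal-parameter-mismatch terms of the form $\|P_{(t+1)\eta g}(\omega)-P_{\teta g}(\omega)\|\le|(t+1)\eta-\teta|\,B_g$. (b) In the contraction of the first Lyapunov component $\|P_{\teta g}(\ox^r)-x^{\star}\|^2$, the optimality subgradient $\widetilde{\nabla}g(x^{\star})=-\nabla f(x^{\star})$ enters and is controlled via $\|\nabla f(x^{\star})\|\le B_g$. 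These are exactly the terms that assemble into the residual $\tfrac{21\tau\eta\eta_g}{\mu n}B_g^2$.

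Then I would kill both contributions. By the parameter-independence of the projection, every mismatch term in (a) is identically zero, so the drift can be bounded using only $L$-smoothness and the $\Lambda$ term, exactly as before but without the $B_g$ slack: since $P_{(t+1)\eta g}(\omega)=\Pi_{\mC}(\omega)=P_{\teta g}(\omega)$, one obtains $\|z_{i,t}^r-P_{\teta g}(\ox^r)\|\le\eta\sum_{s<t}\|\nabla f_i(z_{i,s}^r;\mB_{i,s}^r)+c_i^r\|$ with no extra additive term. By Assumption \ref{asmp:Ic} the subgradient term in (b) vanishes, and the firmly nonexpansive projection inequality $\|\Pi_{\mC}(\ox^{r+1})-x^{\star}\|^2\le\langle\ox^{r+1}-x^{\star},\Pi_{\mC}(\ox^{r+1})-x^{\star}\rangle$, applied now at the genuine fixed point $x^{\star}$, reproduces the same $(1-\mu\teta/3)$ contraction without any $B_g$. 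Feeding these two simplifications into the otherwise unchanged recursion of Theorem \ref{thm: g} yields the claimed bound with the $B_g^2$ residual deleted.

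The main obstacle, and the step I would be most careful about, is completeness: verifying that (a) and (b) are the only entry points of $B_g$ and that removing the mismatch slack leaves no gap elsewhere. In particular I would re-examine the recursion for $\|\Lambda^r-\overline{\Lambda}^r\|^2/n$ to confirm that it still closes using $L$-smoothness and the stochastic-gradient variance alone, and check that the per-round inequality retains the same contraction factor and the same $\sigma^2$ coefficient, so that precisely the third summand of Theorem \ref{thm: g} disappears while the first two survive unchanged. I would also confirm that compactness of $\mC$ — which keeps all iterates, gradients, and correction terms bounded — is what legitimizes the Theorem's use of a finite $B_g$ in the indicator case in the first place, so that the specialization is internally consistent.
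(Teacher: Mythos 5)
Your proposal is correct and follows essentially the same route as the paper's own proof: the paper likewise kills the drift-related $B_g$ terms by using that for $g=I_{\mC}$ the proximal map is a parameter-independent projection (so $P_{t\eta g}(P_{\teta g}(\obx^r))=P_{\teta g}(\obx^r)$ and nonexpansiveness gives $\|z_{i,t}^r-P_{\teta g}(\ox^r)\|\le\|\hz_{i,t}^r-P_{\teta g}(\ox^r)\|$ with no additive slack), and kills the remaining $B_g$ term, which enters through $\|\nabla f(x^{\star})\|\le B_g$, by invoking Assumption \ref{asmp:Ic}, then telescopes the unchanged recursion. Your two identified entry points of $B_g$ are exactly the ones in the paper (the prox-displacement/mismatch terms in Lemma \ref{lem:phi-bar-x} and in the bounds \eqref{eq:Lambda-34}--\eqref{eq:x-x*-35}, and the optimality-subgradient term in \eqref{eqn:Lypnv}), so the specialization yields precisely $\bE[\Omega^{r+1}]\le(1-\tfrac{\mu\teta}{3})\bE[\Omega^{r}]+10\teta^{2}\tfrac{\sigma^{2}}{n\tau b}$ as claimed.
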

We will verify the theoretical results with numerical experiments in the next section. 
\vspace{-2mm}
\section{Numerical Experiments}\label{sec:experiment}
\vspace{-2mm}
Consider the sparse logistic regression problem
\begin{align}\label{eq-logreg-l1}
\underset{{x} \in \mathbb{R}^{d}}{\operatorname{minimize}} \; \frac{1}{n}\sum_{i=1}^n f_i(x)+\frac{\vartheta_2}{2}\|x\|^2+ {\vartheta_1}\|x\|_1, 
\end{align}
where $f_i(x)=\tfrac{1}{m}\sum_{l=1}^{m} \ln \left(1\!+\!\exp\left(-\left(\mathbf{a}_{i l}^{T} {x}\right) b_{i l}\right)\right)$, 
$(\mathbf{a}_{i l}, b_{i l}) $ 
 $\in \mathbb{R}^{d} \times\{-1,+1\}$ is a feature-label pair for the $l$-th sample on worker $i$, and $\vartheta_1$ and $\vartheta_2$ are the regularization parameters. 
The optimal solution $x^{\star}$ of \eqref{eq-logreg-l1} is computed in advance and the performance is measured by the optimality defined as 
$\textit{optimality}:={\|P_{\teta g}(\ox^{r})-x^{\star}\| }/{\|x^{\star}\| }.$
To generate data, we use the method in \cite{li2020federated} which allows to control the degree of heterogeneity by two parameters $(\alpha, \beta)$.

In the first set of experiments, we compare our algorithm with existing algorithms, namely FedMid\cite{yuan2021federated}, FedDA \cite{yuan2021federated}, and Fast-FedDA \cite{bao2022fast}, which all use a fixed number of local updates to solve the composite FL problem.  
We set $(\alpha, \beta)=(10, 10)$, $n=30$, $m=2000$, $\vartheta_2=0.01$, $\vartheta_1=0.0001$, and $\tau=5$. For the proposed algorithm, we use hand-tuned step sizes  $\eta=1$ and $\eta_g=1$. For FedMid and FedDA, we use the same step sizes $\eta=1$ and $\eta_g=1$. For Fast-FedDA, we use the adaptive step sizes as specified in~\cite{bao2022fast}, which are decaying step sizes. We evaluate the algorithms under both full gradients and stochastic gradients with $b=20$. 
\begin{figure}[htbp]
\begin{minipage}[h]{.49\linewidth}
    \centering
    \centerline{\includegraphics[width=4.2cm]{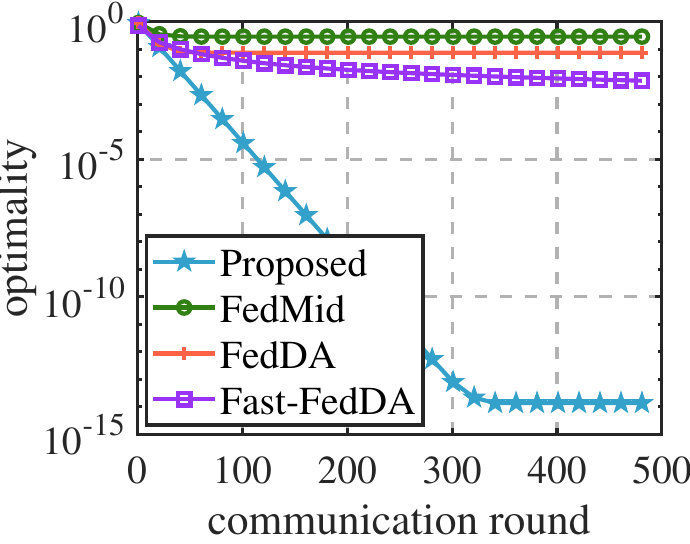}}
\end{minipage}
\hfill
\begin{minipage}[h]{0.49\linewidth}
    \centering
    \centerline{\includegraphics[width=4.2cm]{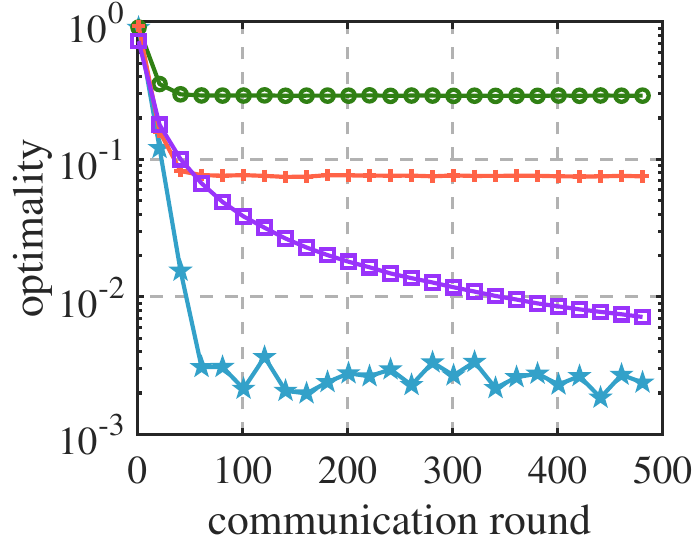}}
\end{minipage}
\vspace{-1mm}
\caption{Comparision with existing methods using full gradients (left) and stochastic gradients (right), respectively. }
\label{fig_34}
\end{figure}

As shown in Fig.~\ref{fig_34}, when using the full gradients, our algorithm achieves exact convergence. Although Theorem \ref{thm: g} suggests the existence of a residual determined by $B_g$ (the subgradient bound), our experimental results show better performance than the theoretical results, indicating that there is the possibility to improve the analysis. 

Due to client drift, FedMid and FedDA only converge to a neighborhood of the optimal solution. FedDA performs better than FedMid because it overcomes the curse of primal averaging.  Fast-FedDA converges slowly due to its decaying step sizes.
When we use stochastic gradients, our algorithm also converges to a neighborhood. The other algorithms still perform worse due to client drift or the use of decaying step sizes.

\begin{figure}[htbp]
\begin{minipage}[h]{.49\linewidth}
    \centering
    \centerline{\includegraphics[width=4.2cm]{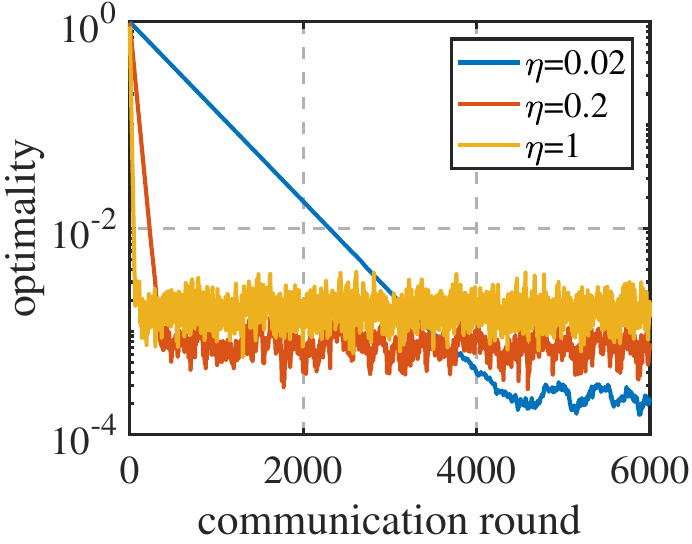}}
\end{minipage}
\hfill
\begin{minipage}[h]{0.49\linewidth}
    \centering
    \centerline{\includegraphics[width=4.2cm]{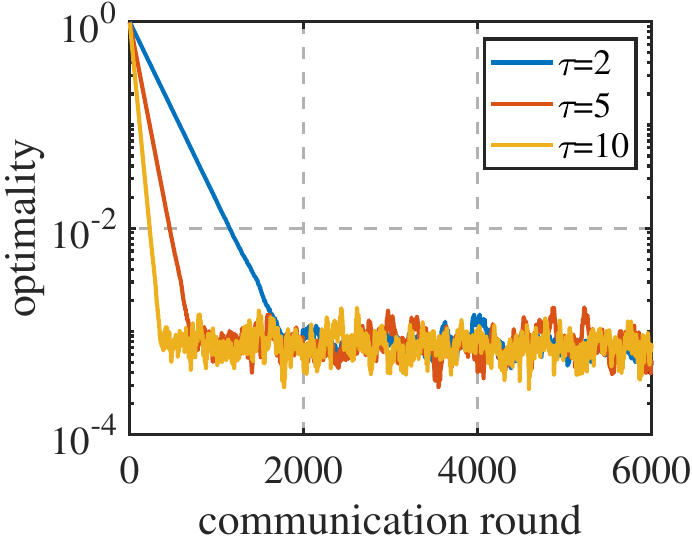}}
\end{minipage}
\vspace{-1mm}
\caption{Impact of $\eta$ (left) and $\tau$ (right) on Algorithm~\ref{alg-fl}.}
\label{fig_56}
\end{figure}
In the second set of experiments, we examine the impact of the step size $\eta$ and the number of local updates $\tau$ on our algorithm. For the impact of $\eta$, we fix $b=50$, $\eta_g=1$, and $\tau=10$ and consider $\eta\in \{ 0.02, 0.2, 1\}$. For the impact of $\tau$, on the other hand, we fix $b=50$, $\eta_g=1$, and $\eta=0.2$ and study $\tau\in \{2, 5, 10\}$.   
As shown in Fig.~\ref{fig_56},  smaller step sizes $\eta$ lead to slower convergence but higher accuracy. In addition, a larger number of local updates $\tau$ leads to faster convergence while maintaining the same level of accuracy. 
\vspace{-2mm}
\section{Conclusion}
\vspace{-2mm}
We have proposed an innovative algorithm for federated learning with composite objectives. By decoupling the proximal operator evaluation and communication, we are able to handle non-smooth regularizers in an efficient manner. The algorithm reduces the communication frequency through local updates, exchanges only a $d$-dimensional vector per communication round per worker, and addresses client drift.  We prove linear convergence up to a neighborhood of the optimal solution and show the advantages of the proposed algorithm compared to existing methods in numerical experiments. 

\newpage
\bibliographystyle{IEEEbib}
\bibliography{refs}

\newpage 
\onecolumn
\section{Appendix}\label{section-app}
\subsection{Derivation to get \eqref{eqn:illustration}}\label{app:prf-illustration}
For the sake of clarity, we provide a compact expression of the proposed Algorithm \ref{alg-fl}
\begin{equation}\label{eqn:compact}
\left\{
\begin{aligned}
    \hbz_{t+1}^r&=\hbz_t^r-\eta \left({\nabla \mathbf{f}}\left(\bz_t^r ; \mB_{t}^r\right)+\frac{1}{\eta_g\eta\tau}( P_{\teta g}(\obx^{r-1})-{\obx^{r}})- \frac{1}{\tau} \sum_{t=0}^{\tau-1} {\nabla \mathbf{f}} \left(\bz_t^{{r-1}} ; \mB_{t}^{r-1}\right)  \right), ~\forall t\in[\tau]-1, r=2,\ldots, R,\\
    \bz^r_{t+1}&= {P_{(t+1)\eta g}}\left(\hbz^{r}_{t+1} \right), ~\forall t\in[\tau]-1, r\in[R],\\
    \obx^{r+1}&=P_{\teta g}(\obx^r)+\eta_g\left(\left(\frac{1_n1_n^T}{n}\otimes I_d\right) \hbz^{r}_{\tau} - {P_{\teta g}(\obx^r)}\right), ~\forall r\in[R],
\end{aligned}
\right.
\end{equation}
where we substitute  $c_i^r$. Note that we set $ \hbz_{t+1}^1=\hbz_t^1-\eta {\nabla \mathbf{f}}\left(\bz_t^1 ; \mB_{t}^1\right)$ for all $t\in[\tau]-1$ when $r=1$. 
In the following, we derive \eqref{eqn:illustration} from \eqref{eqn:compact} by induction. To begin with, we show that \eqref{eqn:illustration} holds for $r=1$.  When $r=1$, according to \eqref{eqn:compact}, we have 
\begin{equation}\label{eq:app-11}
\begin{aligned}
    \hbz_{t+1}^1=\hbz_t^1-\eta {\nabla \mathbf{f}}\left(\bz_t^1 ; \mB_{t}^1\right), ~\forall t\in[\tau]-1. 
\end{aligned}
\end{equation}
In addition, we have $\hbz_0^1= {P_{\teta g}(\obx^1)}$ according to the initialization. After $\tau$ local updates, we get 
\begin{equation}
\begin{aligned}
    \hbz_{\tau}^1=&  {P_{\teta g}(\obx^1)}-\eta \sum_{t=0}^{\tau-1}{\nabla \mathbf{f}}\left(\bz_t^1 ; \mB_{t}^1\right). 
\end{aligned}    
\end{equation}
Substituting the above equality into the update of $\obx^{r+1}$ in \eqref{eqn:compact} and set $r=1$, we have 
\begin{equation}\label{eq-tau-11}
\begin{aligned}
    \obx^{2} 
    =&P_{\teta g}(\obx^1)+\eta_g\left(\left(\frac{1_n1_n^T}{n}\otimes I_d\right) \hbz^{1}_{\tau} - {P_{\teta g}(\obx^1)}\right) \\
    =& P_{\teta g}(\obx^1)+\eta_g\left( \left( \frac{1_n 1_n^T}{n}\otimes I_d \right)\Big(   {P_{\teta g}(\obx^1)}-\eta \sum_{t=0}^{\tau-1}{\nabla \mathbf{f}}\left(\bz_t^1 ; \mB_{t}^1  \right) \Big) - {P_{\teta g}(\obx^1)}
    \right) \\
    =& P_{\teta g} (\obx^{1})-{\eta_g}\eta \sum_{t=0}^{\tau-1} \overline{\nabla \mathbf{f}}\left(\bz_t^1 ; \mB_{t}^1\right). 
\end{aligned}
\end{equation}
After receiving $\obx^{2}$, all the workers can compute $ 
\frac{1}{\tau} \sum_{t=0}^{\tau-1} \overline{\nabla \mathbf{f}}\left(\bz_t^1 ; \mB_{t}^1\right) =\frac{P_{\teta g} (\obx^{1})-\obx^2}{ \eta_g\eta\tau } $  and use it to update the correction term to be used during the local updates of the $2$-th communication round. So far, we have shown that \eqref{eqn:illustration} holds for $r=1$.

Given that \eqref{eqn:illustration} holds for $r$ where $r\ge 2$, to establish the validity for $r + 1$,  the subsequent demonstration is intended to affirm its applicability. In accordance with the inductive hypothesis,  we have
\begin{equation}\label{eq-tau-9}
\begin{aligned}
    \hbz_{t+1}^r= & \hbz_t^r-\eta \Big({\nabla \mathbf{f}}\left(\bz_t^r ; \mB_{t}^r\right)+\frac{1}{\tau} \sum_{t=0}^{\tau-1} \onbf\left(\bz_t^{{r-1}} ; \mB_{t}^{r-1} \right)- \frac{1}{\tau} \sum_{t=0}^{\tau-1} {\nabla \mathbf{f}} \left(\bz_t^{{r-1}} ; \mB_{t}^{r-1}\right)  \Big), ~\forall t\in [\tau]-1
\end{aligned}
\end{equation}
and 
\begin{equation}\label{eq-tau-10}
\begin{aligned}
    \hbz_{\tau}^r=&  {P_{\teta g}(\obx^r)}-\eta \Big(\sum_{t=0}^{\tau-1}{\nabla \mathbf{f}}\left(\bz_t^r ; \mB_{t}^r\right)+ \sum_{t=0}^{\tau-1} \onbf\left(\bz_t^{{r-1}} ; \mB_{t}^{r-1}\right)-  \sum_{t=0}^{\tau-1} {\nabla \mathbf{f}} \left(\bz_t^{{r-1}} ; \mB_{t}^{r-1}\right)  \Big). 
\end{aligned}
\end{equation}
Substituting \eqref{eq-tau-10} into the last update in \eqref{eqn:compact}, we get
\begin{equation}
\begin{aligned}
    &\obx^{r+1}\\
    =& P_{\teta g}(\obx^r)\!+\!\eta_g\left(  \left(\frac{1_n 1_n^T}{n}\otimes I_d \right)\Big(   {P_{\teta g}(\obx^r)}\!-\!\eta \Big(\sum_{t=0}^{\tau-1}{\nabla \mathbf{f}}\left(\bz_t^r ; \mB_{t}^r\right)\!+\! \sum_{t=0}^{\tau-1} \onbf\left(\bz_t^{{r-1}} ; \mB_{t}^{r-1}\right)
    -  \sum_{t=0}^{\tau-1} {\nabla \mathbf{f}} \left(\bz_t^{{r-1}} ; \mB_{t}^{r-1}\right)  \Big)  \Big) \!-\!P_{\teta g}(\obx^r)\right) \\
    =& P_{\teta g} (\obx^{r})-{\eta_g}\eta \sum_{t=0}^{\tau-1} \overline{\nabla \mathbf{f}}\left(\bz_t^r ; \mB_{t}^r\right),  
\end{aligned}
\end{equation}
where we use $ \left(\frac{1_n 1_n^T}{n}\otimes I_d \right)\left(\sum_{t=0}^{\tau-1} \onbf\left(\bz_t^{{r-1}} ; \mB_{t}^{r-1}\right)
    -  \sum_{t=0}^{\tau-1} {\nabla \mathbf{f}} \left(\bz_t^{{r-1}} ; \mB_{t}^{r-1}\right) \right)=0_{nd}$ in the last equality. Thus, \eqref{eqn:illustration}  also holds for $r+1$. We complete the derivation to get \eqref{eqn:illustration}.  

\subsection{Motivation for employing the parameter $(t+1)\eta$ when computing $P_{(t+1)\eta g}{\left(\hz_{i,t+1}^r\right)}$ in Line 11 of Algorithm \ref{alg-fl} }\label{app:line 11}

In Line 11 of the proposed Algorithm \ref{alg-fl}, we  choose to employ the parameter $(t+1)\eta$ to compute  $P_{(t+1)\eta g}{\left(\hz_{i,t+1}^r\right)}$, where $t$ is the index of  local updates. This differs from the usage of $P_{\teta g}$ in Lines 6, 15, and 19 in Algorithm \ref{alg-fl}.
Here, we provide an explanation for the motivation and rationale behind this particular selection of the parameter $(t+1)\eta$. 

We determine this parameter $(t+1)\eta$ through testing our algorithm under the following specific setting:  considering the scenario where there is only one worker, i.e., $n=1$,  our goal is to solve $x^{\star}=\arg\min_{x\in \mathbb{R}^d}~  f_1(x)+g(x)$. In this context, this worker  employs the full gradient 
$\nabla f_1 (w)$ evaluated at a given point $w$, and we set $\tau=2$. We know that the optimal solution $x^{\star}$ satisfies $x^{\star}=P_{\beta g}(x^{\star}-\beta \nabla  f_1(x^{\star}))$ for any $\beta >0$. In this special scenario, an effective algorithm should be able to ``stop'' at the optimal solution $x^{\star}$.   This implies that when the local update commences from the optimal solution $x^{\star}$, the output of the algorithm should consistently yield $x^{\star}$.   By using the parameter 
$(t+1)\eta$ in Line 11 of Algorithm \ref{alg-fl}, our algorithm possesses this particular property. 

The detailed algorithm pseudo-code is presented in Algorithm \ref{alg-fl-appendix}, where we slightly abuse the equal sign and the assignment operator. More precisely, when the local update begins from $x^{\star}$, which implies that the initial value of the global variable $\ox^1$ is 
$x^{\star}-\teta \nabla f_1(x^{\star})$, and the output is always $x^{\star}$.  In Algorithm \ref{alg-fl-appendix}, there is no correction term since the correction term is always equal to 0 when $n=1$. 

It is important to note that in the general case when $n\ge 2$ and stochastic gradients are employed, the property of stopping at the optimal solution no longer holds. Nevertheless, our numerical experiments show that, thanks to the parameter $(t+1)\eta$, our algorithm achieves superior results compared to theoretical predictions, as illustrated in Fig.~\ref{fig_34} (left). For further details, please refer to Section \ref{sec:experiment}.  

\begin{algorithm}[h]
\caption{Proposed algorithm under the special case}
\label{alg-fl-appendix}
\begin{algorithmic}[1]
    \STATE $ \textbf{Input:}$ $R$, $\tau=2$, $\eta$, $\eta_g$
  \STATE Set $\teta=2\eta \eta_g$
    \STATE Set $\ox^{1}=x^{\star}-\teta \nabla f_1(x^{\star}) $
    \FOR {$r = 1, 2, \ldots, R$ }
    \STATE {\bf Worker $i$}
    \STATE Set $\hz_{1, 0}^r=P_{\teta g}(\ox^r)=x^{\star}$ and $z_{1,0}^r=P_{\teta g}(\ox^r)=x^{\star}$
 \STATE { $\bf t=0$}
    \STATE Update 
    $
    \begin{aligned}
        &\hz_{1,1}^r= 
        \hz_{1,0}^r-\eta {\nabla f}_1(z_{1,0}^r)=x^{\star}-\eta \nabla f_1(x^{\star})
    \end{aligned}
    $
    \STATE Update $z_{1,1}^r=P_{\eta g}{\left(\hz_{1,1}^r\right)}=P_{\eta g}{\left( x^{\star}-\eta \nabla f_1(x^{\star})   \right)}=x^{\star} $    
    \STATE { $\bf t=1$}
        \STATE Update 
    $
    \begin{aligned}
        &\hz_{1,2}^r= 
        \hz_{1,1}^r-\eta {\nabla f}_1(z_{1,1}^r)=x^{\star}-2\eta \nabla f_1(x^{\star})
    \end{aligned}
    $
    \STATE Update $z_{1,2}^r=P_{2\eta g}{\left(\hz_{1,2}^r\right)}=P_{2\eta g}{\left( x^{\star}-2\eta \nabla f_1(x^{\star})   \right)}=x^{\star} $
    \STATE Send $ \hz_{1,2}^r=x^{\star}-2\eta \nabla f_1(x^{\star})$ to the server
    \STATE { \bf Server}
    \STATE Update $\ox^{r+1}=P_{\teta g}(\ox^r)\!+\!\eta_g(  \hz_{1,2}^r     \!-\!P_{\teta g}(\ox^r))= x^{\star}+ \eta_g (x^{\star}-2\eta \nabla f_1(x^{\star})  -x^{\star}  ) =x^{\star}-\teta \nabla f_1(x^{\star})$ 
    \STATE Broadcast $\ox^{r+1}=x^{\star}-\teta \nabla f_1(x^{\star})$  to all the workers	
    \STATE {\bf Worker $i$}
    \STATE Receive $\ox^{r+1}=x^{\star}-\teta \nabla f_1(x^{\star})$ from the server
    \ENDFOR
    \STATE \textbf{Output:} $P_{\teta g}(\ox^{R+1})=x^{\star}$
\end{algorithmic}
\end{algorithm}

\subsection{Proof of Theorem \ref{thm: g}}\label{app:prf-thm-utility}
To prove Theorem \ref{thm: g}, we begin with bounding $\|\Lambda^{r} - \overline{\Lambda}^{r}\|^2$. 
\begin{lemma}\label{lem:Lambda-4.4}
Under Assumptions \ref{asm-convex}, \ref{assm:g}, and \ref{asm-sgd}, for the $\Lambda^r$ defined in \eqref{eq:def-Lyapunov}, we have 
\begin{equation}\label{eq:Lambda-19}
    \begin{aligned}
        &\bE \left\|\Lambda^{r+1} - \overline{\Lambda}^{r+1}\right\|^2-2\eta^2\tau^2 L^2 n  {\bE\left\|P_{\teta g}(\ox^{r+1})-P_{\teta g}(\ox^r)\right\|^2}\\
        \le &{4}\eta^2 \tau L^2 {\sum_{t=0}^{\tau-1} \sum_{i=1}^n\bE\left\|z_{i, t}^r-P_{\teta g}(\ox^r)\right\|^2}+ {4}\eta^2 \tau n\frac{\sigma^2}{b}.
    \end{aligned}
\end{equation}
\end{lemma}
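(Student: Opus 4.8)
The plan is to express $\Lambda^{r+1}-\overline{\Lambda}^{r+1}$ through the worker-averaging projection and then split it into three pieces, matching exactly the three terms appearing in \eqref{eq:Lambda-19}. Write $W:=\tfrac{1_n1_n^T}{n}\otimes I_d$ for the averaging operator, so that $\overline{\Lambda}^{r+1}=W\Lambda^{r+1}$ and $\Lambda^{r+1}-\overline{\Lambda}^{r+1}=(I_{nd}-W)\Lambda^{r+1}$. Shifting the index $r\mapsto r+1$ in the definition of $\Lambda^r$ from \eqref{eq:def-Lyapunov} gives
\begin{equation*}
\Lambda^{r+1}=\eta\tau\,\nbf(P_{\teta g}(\obx^{r+1}))+\eta\sum_{t=0}^{\tau-1}\big(\overline{\nbf}(\bz_t^r;\mB_t^r)-\nbf(\bz_t^r;\mB_t^r)\big).
\end{equation*}
Since every block of $\overline{\nbf}(\bz_t^r;\mB_t^r)$ is already the worker-average, it lies in the consensus subspace and is annihilated by $I_{nd}-W$. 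Distributing $\eta\tau\,\nbf(P_{\teta g}(\obx^{r+1}))=\eta\sum_{t=0}^{\tau-1}\nbf(P_{\teta g}(\obx^{r+1}))$ over the $\tau$ summands then yields
\begin{equation*}
\Lambda^{r+1}-\overline{\Lambda}^{r+1}=\eta\,(I_{nd}-W)\sum_{t=0}^{\tau-1}\big(\nbf(P_{\teta g}(\obx^{r+1}))-\nbf(\bz_t^r;\mB_t^r)\big).
\end{equation*}

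Next I would insert $\pm\,\nbf(P_{\teta g}(\obx^r))$ inside the sum to separate the two distances that appear on the right-hand side of the lemma:
\begin{equation*}
\Lambda^{r+1}-\overline{\Lambda}^{r+1}=\underbrace{\eta\tau(I_{nd}-W)\big(\nbf(P_{\teta g}(\obx^{r+1}))-\nbf(P_{\teta g}(\obx^r))\big)}_{A}+\underbrace{\eta(I_{nd}-W)\sum_{t=0}^{\tau-1}\big(\nbf(P_{\teta g}(\obx^r))-\nbf(\bz_t^r;\mB_t^r)\big)}_{B}.
\end{equation*}
Applying $\|x+y\|^2\le 2\|x\|^2+2\|y\|^2$ gives $\|\Lambda^{r+1}-\overline{\Lambda}^{r+1}\|^2\le 2\|A\|^2+2\|B\|^2$. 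For $A$ I would use that $I_{nd}-W$ is an orthogonal projection (so $\|(I_{nd}-W)v\|\le\|v\|$) together with the block structure and $L$-smoothness of each $f_i$: $\|A\|^2\le \eta^2\tau^2\sum_{i=1}^n\|\nabla f_i(P_{\teta g}(\ox^{r+1}))-\nabla f_i(P_{\teta g}(\ox^r))\|^2\le \eta^2\tau^2 L^2 n\,\|P_{\teta g}(\ox^{r+1})-P_{\teta g}(\ox^r)\|^2$, so that $2\|A\|^2$ is precisely the term moved to the left-hand side of \eqref{eq:Lambda-19}.

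For $B$ I would separate each stochastic gradient into its conditional mean and noise, $\nbf(\bz_t^r;\mB_t^r)=\nbf(\bz_t^r)+\big(\nbf(\bz_t^r;\mB_t^r)-\nbf(\bz_t^r)\big)$, so that $B=\eta(I_{nd}-W)\big(\sum_t C_t+\sum_t D_t\big)$ with $C_t:=\nbf(P_{\teta g}(\obx^r))-\nbf(\bz_t^r)$ deterministic given the iterates and $D_t:=\nbf(\bz_t^r)-\nbf(\bz_t^r;\mB_t^r)$ the centered noise. Dropping $I_{nd}-W$ and using Young's inequality gives $2\|B\|^2\le 4\eta^2\|\sum_t C_t\|^2+4\eta^2\|\sum_t D_t\|^2$. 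The $C$-term is handled by Jensen ($\|\sum_t C_t\|^2\le\tau\sum_t\|C_t\|^2$) and $L$-smoothness, giving $4\eta^2\|\sum_t C_t\|^2\le 4\eta^2\tau L^2\sum_{t=0}^{\tau-1}\sum_{i=1}^n\|z_{i,t}^r-P_{\teta g}(\ox^r)\|^2$. For the $D$-term I would invoke the martingale structure of the noise: $\bE[D_t\mid\mathcal{F}_t^r]=0$ while $D_{t'}$ is $\mathcal{F}_t^r$-measurable for $t'<t$, so all cross terms vanish and $\bE\|\sum_t D_t\|^2=\sum_t\bE\|D_t\|^2\le \tau n\sigma^2/b$ by Assumption~\ref{asm-sgd}, whence $4\eta^2\bE\|\sum_t D_t\|^2\le 4\eta^2\tau n\sigma^2/b$. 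Summing $2\bE\|A\|^2$ and $2\bE\|B\|^2$ and moving the $A$-contribution to the left reproduces \eqref{eq:Lambda-19} exactly.

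The routine ingredients here are the projection and smoothness bounds; the two steps that carry the real content are the algebraic rearrangement that distributes the single term $\tau\,\nbf(P_{\teta g}(\obx^{r+1}))$ across the $\tau$ local summands and inserts $\pm\,\nbf(P_{\teta g}(\obx^r))$ so that the two target distances emerge cleanly, and the filtration bookkeeping guaranteeing that the noise contributes only $\tau n\sigma^2/b$ (linear in $\tau$) rather than $\tau^2 n\sigma^2/b$. The latter hinges on verifying that $D_{t'}$ is indeed $\mathcal{F}_t^r$-measurable for $t'<t$, i.e. that the minibatches $\mB_{i,t}^r$ across local steps are drawn independently; I expect this conditioning argument to be the main point requiring care, since $P_{\teta g}(\obx^{r+1})$ depends on all the round-$r$ samples, and one must be sure it enters only through the pathwise bound on term $A$ and never inside the martingale estimate for term $B$.
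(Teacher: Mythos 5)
Your proposal is correct and follows essentially the same route as the paper's proof: both write $\Lambda^{r+1}-\overline{\Lambda}^{r+1}$ as the orthogonal projection $(I_{nd}-\tfrac{1_n1_n^T}{n}\otimes I_d)$ applied to $\eta\bigl(\tau\nbf(P_{\teta g}(\obx^{r+1}))-\sum_{t=0}^{\tau-1}\nbf(\bz_t^r;\mB_t^r)\bigr)$, insert $\pm\,\tau\nbf(P_{\teta g}(\obx^r))$ and $\pm\sum_t\nbf(\bz_t^r)$, apply the projection bound, Young's inequality (giving the factors $2$, $4$, $4$), $L$-smoothness, and the martingale cancellation of the noise cross-terms, and finally move the $2\eta^2\tau^2L^2 n$ term to the left-hand side. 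The only cosmetic difference is that the paper invokes a cited corollary for the variance bound $\bE\|\sum_t D_t\|^2\le \tau n\sigma^2/b$, whereas you spell out the filtration argument explicitly.
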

\begin{proof}
To handle the stochastic noise in the gradients, we define  
\begin{equation*}
    \begin{aligned}
        &\mathbf{s}_t^r:=\nbf\left(\bz_t^r ; \mB_{t}^r \right)-\nabla \mathbf{f}\left(\bz_t^r\right),\\ 
        &\bar{s}_t^r:=\frac{1}{n} \sum_{i=1}^n \left(\nabla f_i\left(z_{i, t}^r ; \mB_{i,t}^r\right)-\nabla f_i\left(z_{i, t}^r\right)\right).
    \end{aligned}
\end{equation*}		
With the definition of $\Lambda^r$ in \eqref{eq:def-Lyapunov}, we have 
\begin{equation}\label{eq:Lambda-1}
    \hspace{-2mm}
    \begin{aligned}
        &\bE\left\|\Lambda^{r+1} - \overline{\Lambda}^{r+1} \right\|^2\\
        =&\eta^2\bE\left\| \tau\nabla \mathbf{f}\left(P_{\teta g}(\obx^{r+1})\right)-  \sum_{t=0}^{\tau-1} \nbf \left(\bz_t^{{r}} ; \mB_{t}^r\right)-\tau\overline{\nabla\mathbf{f}}(P_{\teta g}(\obx^{r+1}))+ \sum_{t=0}^{\tau-1} \overline{\nbf}\left(\bz_t^{{r}} ; \mB_{t}^r\right)\right\|^2 \\
        \le & \eta^2 \bE \Big\| { \tau\nabla \mathbf{f}\left(P_{\teta g}(\obx^{r+1})\right)- \tau\nabla \mathbf{f}\left(P_{\teta g}(\obx^{r})\right)}+ \tau\nabla \mathbf{f}\left(P_{\teta g}(\obx^{r})\right) - \sum_{t=0}^{\tau-1}\nabla \mathbf{f}\left({\bz}_t^{r}\right) + \sum_{t=0}^{\tau-1}\nabla \mathbf{f}\left(\bz_t^{r}\right) -  \sum_{t=0}^{\tau-1} \nbf \left(\bz_t^{{r}} ; \mB_{t}^r\right)\Big\|^2\\
        {\leq}&  2\eta^2\tau^2 L^2 n  {\bE\left\|P_{\teta g}(\ox^{r+1})-P_{\teta g}(\ox^r)\right\|^2}+ {4}\eta^2 \tau L^2 {\sum_{t=0}^{\tau-1} \sum_{i=1}^n\bE\left\|z_{i, t}^r-P_{\teta g}(\ox^r)\right\|^2}+ {4}\eta^2 \tau n\frac{\sigma^2}{b},
    \end{aligned}
\end{equation}
where  
we use $\left\| \mathbf{y} -  \left(\frac{1_n 1_n^T}{n}\otimes I_d \right)\mathbf{y}\right\|^2\le \| \mathbf{y}\|^2  $ for any vector $\mathbf{y}\in \mathbb{R}^{nd}$ in the first inequality, and $\|a_1 + a_2 \|^2 \leq 2(\|a_1\|^2 + \|a_2\|^2)$ for $a_1,a_2 \in \mathbb{R}^d$ and the fact \cite[Corollary C.1]{noble2022differentially} that 
\begin{equation}\label{eq:sigma}
\begin{aligned}
    &{\bE}\left\| \frac{1}{\tau n } \sum_{t=0}^{\tau-1} \sum_{i=1}^n   \left(\nabla f_i(z_{i, t}^r ; \mB_{i,t}^r)-\nabla f_i(z_{i, t}^r)\right)\right\|^2  \\
    = & {\bE} \!\left[ {\bE}\! \left[ \left\| {\frac{1}{ \tau n }} \sum_{t=0}^{\tau-1} \sum_{i=1}^n  \left( {\nabla f_i}\left(z_{i, t}^r ; \mB_{i,t}^r\right) \!-\! \nabla f_i\left(z_{i, t}^r\right)\right) \right\|^2 
    \!| \mathcal{F}_t^r \!\right]\! \right]  \le \frac{1}{\tau n} \frac{\sigma^2}{ b}
\end{aligned} 
\end{equation}
in the second inequality. Thus, we get \eqref{eq:Lambda-19} and complete the proof of Lemma \ref{lem:Lambda-4.4}. 
\end{proof}

In the following, we bound $\|P_{\teta g}(\ox^r)-x^\star\|^2$. 
\begin{lemma}\label{lem:x-x*-4.5}
Under Assumptions \ref{asm-convex}, \ref{assm:g}, and \ref{asm-sgd}, for the sequence $\{ P_{\teta g}(\overline{x}^r)\}_r$ generated by Algorithm \ref{alg-fl},  we have 
\begin{equation}\label{eq:x-bar-x*-24}
    \begin{aligned}
        &\bE\|P_{\teta g}(\ox^{r+1})-x^{\star} \|^2- {2\teta\left( \frac{L}{2}- \frac{1}{4\teta}\right)\bE\|P_{\teta g}(\ox^{r+1})-P_{\teta g}(\ox^{r})\|^2 }\\
        \le& \left(1-\frac{\mu\teta}{2} \right) \bE\|P_{\teta g}(\ox^{r})-x^{\star} \|^2+ 4\teta^2 \frac{1}{n\tau} \frac{\sigma^2}{b}
        +\left(4\teta^2 L^2 +\frac{2\teta L^2}{\mu}\right) \frac{1}{n\tau} \sum_{i=1}^{n} \sum_{t=0}^{\tau-1}\bE\|z_{i, t}^r-P_{\teta g}(\ox^r)\|^2. 
    \end{aligned}
\end{equation}
\end{lemma}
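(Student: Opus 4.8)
The plan is to observe that the server recursion is an inexact proximal--gradient step and then run a strongly-convex descent argument. Writing $y^r:=P_{\teta g}(\ox^r)$ and letting $\bg^r:=\tfrac{1}{n\tau}\sum_{i=1}^{n}\sum_{t=0}^{\tau-1}\nabla f_i(z_{i,t}^r;\mB_{i,t}^r)$ be the doubly averaged stochastic gradient, the last line of \eqref{eqn:illustration} together with $\teta=\eta\eta_g\tau$ gives $\ox^{r+1}=y^r-\teta\bg^r$, so that $y^{r+1}=\prox_{\teta g}(y^r-\teta\bg^r)$. First I would write the optimality of the prox subproblem (whose objective is $\tfrac1\teta$-strongly convex) as a lower bound at $x^\star$, and combine it with the descent lemma $f(y^{r+1})\le f(y^r)+\langle\nabla f(y^r),y^{r+1}-y^r\rangle+\tfrac L2\|y^{r+1}-y^r\|^2$, the $\mu$-strong convexity of $f$ at the pair $(y^r,x^\star)$, and the composite optimality $\Phi(y^{r+1})\ge\Phi(x^\star)$, where $\Phi:=f+g$ and $0\in\nabla f(x^\star)+\partial g(x^\star)$. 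After the $\langle\nabla f(y^r),x^\star-y^r\rangle$ terms cancel, this yields the clean one-step bound
\begin{equation*}
\begin{aligned}
\|y^{r+1}-x^\star\|^2\le {}& (1-\mu\teta)\|y^r-x^\star\|^2-(1-\teta L)\|y^{r+1}-y^r\|^2\\
&+2\teta\langle \bg^r-\nabla f(y^r),\,x^\star-y^r\rangle-2\teta\langle \bg^r-\nabla f(y^r),\,y^{r+1}-y^r\rangle ,
\end{aligned}
\end{equation*}
in which the inaccuracy is carried entirely by $\bg^r-\nabla f(y^r)$.

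Next I would split this error as $\bg^r-\nabla f(y^r)=(\overline{\bg}^r-\nabla f(y^r))+(\bg^r-\overline{\bg}^r)$, with $\overline{\bg}^r:=\tfrac1{n\tau}\sum_{i,t}\nabla f_i(z_{i,t}^r)$ the conditional-mean (noise-free) direction, separating a \emph{drift} part and a zero-mean \emph{noise} part. In the inner product against $x^\star-y^r$, the noise contribution $2\teta\langle\bg^r-\overline{\bg}^r,x^\star-y^r\rangle$ vanishes in expectation: $y^r$ is $\mathcal{F}_0^r$-measurable while each increment $\nabla f_i(z_{i,t}^r;\mB_{i,t}^r)-\nabla f_i(z_{i,t}^r)$ has zero mean given $\mathcal{F}_t^r\supseteq\mathcal{F}_0^r$ by Assumption~\ref{asm-sgd}, so the tower property kills it. The drift contribution $2\teta\langle\overline{\bg}^r-\nabla f(y^r),x^\star-y^r\rangle$ I would bound by Young's inequality with weight $2/\mu$; this produces $\tfrac{\mu\teta}{2}\|y^r-x^\star\|^2$, which I combine with $(1-\mu\teta)\|y^r-x^\star\|^2$ to reach the advertised factor $1-\tfrac{\mu\teta}{2}$, together with $\tfrac{2\teta L^2}{\mu}\cdot\tfrac1{n\tau}\sum_{i,t}\|z_{i,t}^r-y^r\|^2$ after using $\|\overline{\bg}^r-\nabla f(y^r)\|^2\le \tfrac{L^2}{n\tau}\sum_{i,t}\|z_{i,t}^r-y^r\|^2$ (Jensen and $L$-smoothness).

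For the two terms paired with $y^{r+1}-y^r$ I would apply Young's inequality with weight $4\teta$. The drift part gives $4\teta^2\|\overline{\bg}^r-\nabla f(y^r)\|^2\le 4\teta^2L^2\cdot\tfrac1{n\tau}\sum_{i,t}\|z_{i,t}^r-y^r\|^2$, completing the stated drift coefficient $4\teta^2L^2+\tfrac{2\teta L^2}{\mu}$, while the noise part gives $4\teta^2\bE\|\bg^r-\overline{\bg}^r\|^2\le 4\teta^2\tfrac{\sigma^2}{n\tau b}$ by the variance estimate \eqref{eq:sigma}, which is exactly the residual $4\teta^2\tfrac1{n\tau}\tfrac{\sigma^2}{b}$. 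Each of these two applications of Young also donates $\tfrac14\|y^{r+1}-y^r\|^2$, so $\tfrac12$ of the $(1-\teta L)\|y^{r+1}-y^r\|^2$ budget is consumed and the surviving coefficient is $1-\teta L-\tfrac12=\tfrac12-\teta L=-2\teta(\tfrac L2-\tfrac1{4\teta})$, precisely the term moved to the left-hand side of \eqref{eq:x-bar-x*-24}. Taking total expectation then delivers \eqref{eq:x-bar-x*-24}.

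I expect the main obstacle to be the stochastic bookkeeping: a naive contraction would pair the error with $y^{r+1}-x^\star$, but $y^{r+1}$ depends on the round-$r$ samples, so the noise would not be conditionally mean zero; splitting the base point as $(x^\star-y^r)+(y^r-y^{r+1})$ is exactly what lets the zero-mean part be eliminated through measurability of $y^r$, while the remaining correlated cross term is merely decoupled (not eliminated) by Young's inequality and absorbed through the variance bound. The second delicate point is choosing the Young weights ($2/\mu$ on the contraction side, $4\teta$ on each $y^{r+1}-y^r$ coupling) so that all four target coefficients land simultaneously; notably, the $\tfrac L2$ appearing in the stated left-hand coefficient is exactly the descent-lemma constant, and no restriction on the step sizes is needed for this lemma, the bound \eqref{eq:stepsize} being spent only later when \eqref{eq:Lambda-19} and \eqref{eq:x-bar-x*-24} are combined with the drift bound.
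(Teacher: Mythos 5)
Your proposal is correct and follows essentially the same route as the paper's proof: identify the server update as an inexact proximal--gradient step, apply the three-point property of the prox subproblem together with the descent lemma, strong convexity at $(P_{\teta g}(\ox^r),x^\star)$, and the optimality of $x^\star$, then split the gradient error into a drift part (bounded via $L$-smoothness and Young's inequality with weights $2/\mu$ and $4\teta$) and a zero-mean noise part (killed by the tower property against $x^\star-P_{\teta g}(\ox^r)$ and absorbed via \eqref{eq:sigma} against the $P_{\teta g}(\ox^{r+1})-P_{\teta g}(\ox^r)$ term), yielding exactly the stated coefficients. If anything, your treatment of the stochastic bookkeeping is more careful than the paper's, which writes the drift/noise split as a pointwise equality while silently dropping the noise term paired with $P_{\teta g}(\ox^{r})-x^\star$ — a step that, as you note, is only valid after taking expectations.
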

\begin{proof}
By  \eqref{eqn:illustration}, we have
		\begin{equation}\label{eq:bar-27}
			\begin{aligned}
				\ox^{r+1}=P_{\teta g}(\ox^r)-\frac{\teta}{n\tau} \sum_{t=0}^{\tau-1} \sum_{i=1}^n\left( \nabla f_i\left(z_{i, t}^r\right)+s_{i, t}^r\right)
			\end{aligned}
		\end{equation}
		and 
		\begin{equation}\label{eqn:pgd}
			\begin{aligned}
				P_{\teta g}(\ox^{r+1})\!=\!P_{\teta g}\Big(P_{\teta g}(\ox^r)-\frac{\teta}{n\tau} \sum_{t=0}^{\tau-1} \sum_{i=1}^n\left( \nabla f_i\left(z_{i, t}^r\right)+s_{i, t}^r\right)\Big).
			\end{aligned}
		\end{equation}
By \eqref{eqn:pgd} and \cite[Lemma 2]{zhou2018simple}, we have the so-called three-point property 
\begin{equation}\label{eqn:three-point}
    \begin{aligned}
        &\left\langle \frac{1}{n\tau} \sum_{t=0}^{\tau-1} \sum_{i=1}^n\left( \nabla f_i\left(z_{i, t}^r\right)+s_{i, t}^r\right), P_{\teta g}(\ox^{r+1})-x^{\star}\right\rangle \\
        \leq&-\frac{1}{2\teta}\left\|P_{\teta g}(\ox^{r+1})-P_{\teta g}(\ox^{r})\right\|^2+\frac{1}{2\teta}\left\|P_{\teta g}(\ox^{r})-x^{\star}\right\|^2-\frac{1}{2\teta}\left\|P_{\teta g}(\ox^{r+1})-x^{\star}\right\|^2+g\left(x^{\star}\right)-g\left(P_{\teta g}(\ox^{r+1})\right).
    \end{aligned}	
\end{equation}
Then, with the $L$-smoothness of $f$, we have 
\begin{equation}\label{eq-f+g}
    \begin{aligned}
        & g\left(P_{\teta g}(\ox^{r+1})\right)+f\left(P_{\teta g}(\ox^{r+1})\right) \\
        \leq & g\left(P_{\teta g}(\ox^{r+1})\right)+f(P_{\teta g}(\ox^r))+\left\langle\nabla f(P_{\teta g}(\ox^r)), P_{\teta g}(\ox^{r+1})-P_{\teta g}(\ox^{r})\right\rangle+\frac{L}{2}\left\|P_{\teta g}(\ox^{r+1})-P_{\teta g}(\ox^{r})\right\|^2 \\
        = & g\left(P_{\teta g}(\ox^{r+1})\right)+f(P_{\teta g}(\ox^r))+\left\langle\nabla f(P_{\teta g}(\ox^r)), x^{\star}-P_{\teta g}(\ox^{r})\right\rangle+\frac{L}{2}\left\|P_{\teta g}(\ox^{r+1})-P_{\teta g}(\ox^{r})\right\|^2 \\
        &+\left\langle\nabla f(P_{\teta g}(\ox^r)), P_{\teta g}(\ox^{r+1})-x^{\star}\right\rangle \\
        = & g\left(P_{\teta g}(\ox^{r+1})\right)+f(P_{\teta g}(\ox^r))+\left\langle\nabla f(P_{\teta g}(\ox^r)), x^{\star}-P_{\teta g}(\ox^{r})\right\rangle+\frac{L}{2}\left\|P_{\teta g}(\ox^{r+1})-P_{\teta g}(\ox^{r})\right\|^2 \\
        +&\left\langle \frac{1}{n\tau} \sum_{t=0}^{\tau-1} \sum_{i=1}^n\left( \nabla f_i\left(z_{i, t}^r\right)+s_{i, t}^r\right), P_{\teta g}(\ox^{r+1})-x^{\star} \right\rangle +\left\langle\nabla f(P_{\teta g}(\ox^r))-\frac{1}{n\tau} \sum_{t=0}^{\tau-1} \sum_{i=1}^n\left( \nabla f_i\left(z_{i, t}^r\right)+s_{i, t}^r\right), P_{\teta g}(\ox^{r+1})-x^{\star}\right\rangle  \\
        \leq & g\left(P_{\teta g}(\ox^{r+1})\right)-\frac{1}{2\teta}\left\|P_{\teta g}(\ox^{r+1})-P_{\teta g}(\ox^{r})\right\|^2+\frac{1}{2\teta}\left\|P_{\teta g}(\ox^{r})-x^{\star}\right\|^2-\frac{1}{2\teta}\left\|P_{\teta g}(\ox^{r+1})-x^{\star}\right\|^2+g\left(x^{\star}\right)-g\left(P_{\teta g}(\ox^{r+1})\right) \\
        & +f\left(x^{\star}\right)-\frac{\mu}{2}\left\|P_{\teta g}(\ox^{r})-x^{\star}\right\|^2+\frac{L}{2}\left\|P_{\teta g}(\ox^{r+1})-P_{\teta g}(\ox^{r})\right\|^2 \\
        &+\left\langle\nabla f(P_{\teta g}(\ox^r))-\frac{1}{n\tau} \sum_{t=0}^{\tau-1} \sum_{i=1}^n\left( \nabla f_i\left(z_{i, t}^r\right)+s_{i, t}^r\right), P_{\teta g}(\ox^{r+1})-x^{\star}\right\rangle\\
        = & g\left(x^{\star}\right)+f\left(x^{\star}\right)+\left(\frac{1}{2\teta}-\frac{\mu}{2}\right)\left\|P_{\teta g}(\ox^{r})-x^{\star}\right\|^2-\frac{1}{2\teta}\left\|P_{\teta g}(\ox^{r+1})-x^{\star}\right\|^2+\left(\frac{L}{2}-\frac{1}{2\teta}\right)\left\|P_{\teta g}(\ox^{r+1})-P_{\teta g}(\ox^{r})\right\|^2 \\
        &+\left\langle\nabla f(P_{\teta g}(\ox^r))-\frac{1}{n\tau} \sum_{t=0}^{\tau-1} \sum_{i=1}^n\left( \nabla f_i\left(z_{i, t}^r\right)+s_{i, t}^r\right), P_{\teta g}(\ox^{r+1})-x^{\star}\right\rangle,
    \end{aligned}
\end{equation}
where we use \eqref{eqn:three-point} and strong convexity of $f(x)$ in the second inequality.  
For the last term on the right hand of \eqref{eq-f+g}, we have 
\begin{equation}
    \begin{aligned}
        &\left\langle \nabla f(P_{\teta g}(\ox^r))- \frac{1}{n\tau} \sum_{t=0}^{\tau-1} \sum_{i=1}^n \nabla f_i\left(z_{i, t}^r\right)+{s_{i, t}^r},
        P_{\teta g}(\ox^{r+1})-x^{\star} \right\rangle\\
        =&\left\langle \nabla f(P_{\teta g}(\ox^r))- \frac{1}{n\tau} \sum_{t=0}^{\tau-1} \sum_{i=1}^n \nabla f_i\left(z_{i, t}^r\right)+{s_{i, t}^r},P_{\teta g}(\ox^{r+1})-P_{\teta g}(\ox^{r}) \right\rangle\\
        &+\left\langle \nabla f(P_{\teta g}(\ox^r))- \frac{1}{n\tau} \sum_{t=0}^{\tau-1} \sum_{i=1}^n \nabla f_i\left(z_{i, t}^r\right),
        P_{\teta g}(\ox^{r})-x^{\star} \right\rangle\\
        \le& \frac{1}{4\teta} \|P_{\teta g}(\ox^{r+1})-P_{\teta g}(\ox^{r})\|^2+\teta \|\nabla f(P_{\teta g}(\ox^r))- \frac{1}{n\tau} \sum_{t=0}^{\tau-1} \sum_{i=1}^n \nabla f_i\left(z_{i, t}^r\right)+{s_{i, t}^r} \|^2 \\
        &+\frac{L^2}{\mu} \frac{1}{n\tau} \sum_{i=1}^{n} \sum_{t=0}^{\tau-1} {\|z_{i, t}^r-P_{\teta g}(\ox^r)\|^2}+\frac{\mu}{4}\|P_{\teta g}(\ox^{r})-x^{\star}\|^2,
    \end{aligned}
\end{equation}
where we use Young's inequality in the last inequality. 
Substituting the above inequality into \eqref{eq-f+g}, we have
\begin{equation}
    \begin{aligned}
        &\bE[f(P_{\teta g}(\ox^{r+1}))+g(P_{\teta g}(\ox^{r+1}))]\\	
        \le &f(x^{\star})+g(x^{\star})+ {\left( \frac{L}{2}- \frac{1}{4\teta}\right)\bE\|P_{\teta g}(\ox^{r+1})-P_{\teta g}(\ox^{r})\|^2 }
        +\left(\frac{1}{2\teta}-\frac{\mu}{4} \right) \bE\|P_{\teta g}(\ox^{r})-x^{\star} \|^2\\&-\frac{1}{2\teta}\bE\|P_{\teta g}(\ox^{r+1})-x^{\star} \|^2
        +\left(2\teta L^2 +\frac{L^2}{\mu}\right) \frac{1}{n\tau} \sum_{i=1}^{n} \sum_{t=0}^{\tau-1} \bE\|z_{i, t}^r-P_{\teta g}(\ox^r)\|^2+ 2\teta \frac{1}{n\tau} \frac{\sigma^2}{b}, 
    \end{aligned}
\end{equation}
where we use \eqref{eq:sigma}. 
Thus, we have
\begin{equation}
    \begin{aligned}
        &\frac{1}{2\teta}\bE\|P_{\teta g}(\ox^{r+1})-x^{\star} \|^2\\
        \le& \left(\frac{1}{2\teta}-\frac{\mu}{4} \right) \bE\|P_{\teta g}(\ox^{r})-x^{\star} \|^2
        +\left(2\teta L^2 +\frac{L^2}{\mu}\right) \frac{1}{n\tau} \sum_{i=1}^{n} \sum_{t=0}^{\tau-1}\bE\|z_{i, t}^r-P_{\teta g}(\ox^r)\|^2+ 2\teta \frac{1}{n\tau} \frac{\sigma^2}{b}\\
        & {+\left( \frac{L}{2}- \frac{1}{4\teta}\right)\bE\|P_{\teta g}(\ox^{r+1})-P_{\teta g}(\ox^{r})\|^2 },
    \end{aligned}
\end{equation}  
which implies that
\begin{equation}
    \begin{aligned}
        &\bE\|P_{\teta g}(\ox^{r+1})-x^{\star} \|^2\\
        \le& \left(1-\frac{\mu\teta}{2} \right) \bE\|P_{\teta g}(\ox^{r})-x^{\star} \|^2 	+ 4\teta^2 \frac{1}{n\tau} \frac{\sigma^2}{b}
        +\left(4\teta^2 L^2 +\frac{2\teta L^2}{\mu}\right) \frac{1}{n\tau} \sum_{i=1}^{n} \sum_{t=0}^{\tau-1} {\bE\|z_{i, t}^r-P_{\teta g}(\ox^r)\|^2}\\
        & {+2\teta\left( \frac{L}{2}- \frac{1}{4\teta}\right)\bE\|P_{\teta g}(\ox^{r+1})-P_{\teta g}(\ox^{r})\|^2 }. 
    \end{aligned}
\end{equation}
We get \eqref{eq:x-bar-x*-24} and complete the proof of Lemma \ref{lem:x-x*-4.5}. 

\end{proof}
In the following, we bound  $\sum_{t=0}^{\tau-1} \sum_{i=1}^n \mathbb{E}\left\|\hz_{i, t}^r-P_{\teta g}(\ox^r)\right\|^2$,   
which will be used to bound  $\sum_{t=0}^{\tau-1} \sum_{i=1}^n \mathbb{E}\left\|z_{i, t}^r-P_{\teta g}(\ox^r)\right\|^2$ that appears on both right hands of \eqref{eq:Lambda-19}  and \eqref{eq:x-bar-x*-24}.  The proof follows similar steps as in \cite{karimireddy2020scaffold,alghunaim2023local}.  
\begin{lemma}\label{lem:phi-bar-x}
Under Assumptions \ref{asm-convex},  \ref{assm:g}, and \ref{asm-sgd}, if $\eta \leq \frac{1}{{4\sqrt{2}} L \tau}$, we have
\begin{equation}\label{eq:phi--29}
    \begin{aligned}
        &{\mathbb{E}\left[\sum_{t=0}^{\tau-1} \sum_{i=1}^n\left\|\hz_{i, t}^r- {P_{\teta g}(\ox^r)}\right\|^2\right]}
        \le	
        { {32} \tau\mathbb{E}\left\|\Lambda^r-\overline{\Lambda}^r\right\|^2 }+{32} \eta^2\tau^3 n  \mathbb{E}\left\| {\nabla f}(P_{\teta g}(\ox^r)) \right\|^2+{8} \eta^2 \tau^2 n\frac{\sigma^2}{b}+n\tau  {\frac{\teta^2}{\eta_g^2}} B_g^2. 
    \end{aligned}
\end{equation}
\end{lemma}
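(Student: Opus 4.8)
The plan is to unroll the pre-proximal recursion, decompose each local direction into four interpretable pieces, read off three of the target terms directly, and close a self-referential inequality created by the fourth using the step-size restriction. Write $x^r := P_{\teta g}(\ox^r)$ for the common starting point, so that $\hz_{i,0}^r = z_{i,0}^r = x^r$. Unrolling the first line of \eqref{eqn:illustration} gives, for each worker $i$,
\begin{equation*}
\hz_{i,t}^r - x^r = -\eta \sum_{s=0}^{t-1}\Big( \nabla f_i(z_{i,s}^r;\mB_{i,s}^r) + c_i^r \Big),
\end{equation*}
so the whole lemma reduces to controlling this partial sum of local directions. The key algebraic identity is that the per-round average correction vanishes, whence $\overline{\Lambda}_i^r = \eta\tau\,\nabla f(x^r)$ and $\nabla f_i(x^r) + c_i^r = \tfrac{1}{\eta\tau}\Lambda_i^r$. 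Using this I would split
\begin{equation*}
\nabla f_i(z_{i,s}^r;\mB_{i,s}^r) + c_i^r = s_{i,s}^r + \big( \nabla f_i(z_{i,s}^r) - \nabla f_i(x^r) \big) + \tfrac{1}{\eta\tau}\big(\Lambda_i^r - \overline{\Lambda}_i^r\big) + \nabla f(x^r),
\end{equation*}
where $s_{i,s}^r := \nabla f_i(z_{i,s}^r;\mB_{i,s}^r) - \nabla f_i(z_{i,s}^r)$ is the stochastic noise.

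Next I would apply $\|\sum_{k=1}^4 a_k\|^2 \le 4\sum_k \|a_k\|^2$ to the unrolled expression and sum over $t \in [\tau]-1$ and $i\in[n]$. Three of the four groups are immediate. The noise group $\eta\sum_s s_{i,s}^r$ is bounded in expectation via the conditional unbiasedness and variance bound of Assumption~\ref{asm-sgd} together with \eqref{eq:sigma}, giving the $\eta^2\tau^2 n\,\sigma^2/b$ term. The $s$-independent deviation group $\|\tfrac{1}{\eta\tau}\sum_s(\Lambda_i^r-\overline{\Lambda}_i^r)\|^2$ contributes $\tfrac{t^2}{\eta^2\tau^2}\|\Lambda_i^r-\overline{\Lambda}_i^r\|^2$, which after summation over $t$ and $i$ yields the $\tau\|\Lambda^r-\overline{\Lambda}^r\|^2$ term. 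The common-gradient group gives $t^2\|\nabla f(x^r)\|^2$, producing the $\eta^2\tau^3 n\,\|\nabla f(P_{\teta g}(\ox^r))\|^2$ term.

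The crux is the smoothness group $\nabla f_i(z_{i,s}^r) - \nabla f_i(x^r)$, which by $L$-smoothness is at most $L^2\|z_{i,s}^r - x^r\|^2$. I would split $\|z_{i,s}^r - x^r\|^2 \le 2\|z_{i,s}^r - \hz_{i,s}^r\|^2 + 2\|\hz_{i,s}^r - x^r\|^2$. The prox gap is controlled by the optimality condition of $z_{i,s}^r = P_{s\eta g}(\hz_{i,s}^r)$, namely $\hz_{i,s}^r - z_{i,s}^r = s\eta\,\widetilde{\nabla}g(z_{i,s}^r)$ for some subgradient, so that $\|z_{i,s}^r - \hz_{i,s}^r\| \le s\eta B_g \le \tau\eta B_g$ by Assumption~\ref{assm:g}; summed and combined with the leading factors this produces a contribution of the stated order $n\tau\,\tfrac{\teta^2}{\eta_g^2}B_g^2$ (recall $\teta=\eta\eta_g\tau$). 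The second piece $\|\hz_{i,s}^r - x^r\|^2$ is itself a partial sum of the left-hand side, so it reappears on the right multiplied by a coefficient of order $\eta^2\tau^2 L^2$.

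The main obstacle, and precisely the reason the bound $\eta \le \tfrac{1}{4\sqrt 2\,L\tau}$ is imposed, is to close this self-referential inequality: the chosen step size forces the recursive coefficient below $1/2$, so the $\sum\|\hz_{i,s}^r - x^r\|^2$ term can be absorbed into the left-hand side, after which rearranging and collecting the numerical constants gives \eqref{eq:phi--29}. Conceptually the entire argument is the four-way decomposition, the proximal-gap estimate, and the absorption step; I expect only the tracking of the constants (the factors $32$, $8$, and the coefficient on the $B_g^2$ term) to be tedious rather than difficult.
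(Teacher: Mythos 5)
Your proposal is correct, and it reaches \eqref{eq:phi--29} by a route that differs in its mechanics from the paper's, even though it relies on the same three key estimates. The paper works \emph{per step}: it derives a one-step recursion \eqref{eq-phi-P1} in which the Young-type inequality contributes a growth factor $\left(1+\frac{1}{\tau-1}\right)$, the smoothness term is absorbed into that factor at each step (this is where $\eta\le\frac{1}{4\sqrt 2 L\tau}$ enters, via the prox-gap bound \eqref{eq-Bg}), and the full $\|\Lambda^r\|^2$ is carried along; the recursion is then unrolled geometrically using $\sum_{\ell\le\tau-1}\bigl(1+\tfrac{5/4}{\tau-1}\bigr)^{\ell}\le 4\tau$, and only at the very end is $\|\Lambda^r\|^2$ split as $2\|\Lambda^r-\overline{\Lambda}^r\|^2+2\|\overline{\Lambda}^r\|^2$ with $\overline{\Lambda}^r=\eta\tau\,\overline{\nabla\mathbf{f}}(P_{\teta g}(\obx^r))$. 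You instead unroll \emph{first} to a closed-form partial sum, split the direction four ways from the outset (noise, smoothness deviation, $\Lambda_i^r-\overline{\Lambda}_i^r$, and the common gradient $\nabla f(x^r)$ — your identity $\nabla f_i(x^r)+c_i^r=\tfrac{1}{\eta\tau}\Lambda_i^r$ is exactly right and matches the paper's definition of $\Lambda^r$), and then close the self-referential inequality by absorption: the smoothness group reintroduces $\sum_{s,i}\|\hz_{i,s}^r-x^r\|^2$ with coefficient $8\eta^2\tau^2L^2\le\frac14$ under the stated step size, so it can be moved to the left. What each approach buys: your absorption argument avoids the exponential bookkeeping and actually yields sharper constants (e.g.\ roughly $\tfrac{16}{9}\tau$ rather than $32\tau$ on the $\Lambda$-deviation term, so your bound trivially implies \eqref{eq:phi--29}), whereas the paper's per-step contraction is the more standard SCAFFOLD-style drift recursion and produces intermediate per-$t$ bounds \eqref{eq:phi-31} that can be reused elsewhere. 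Both proofs use the prox-gap estimate $\|\hz_{i,s}^r-z_{i,s}^r\|\le s\eta B_g$ in the same way, and both therefore share the same implicit restriction to the bounded-subgradient case of Assumption \ref{assm:g} (the indicator case is treated separately in Corollary \ref{coro-Ic}), so your proof has the same scope as the paper's.
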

\begin{proof}	
If $\tau=1$, then $\hz_{i, t}^r-P_{\teta g}(\ox^r) =0_{d} $. 
Now suppose that $\tau \geq 2$. According to \eqref{eqn:illustration}, we have 
\begin{equation}\label{eq-phi-P1}
    \begin{aligned}
        &\mathbb{E}\left[ \sum_{i=1}^n\left\|\hz_{i, t+1}^r- {P_{\teta g}(\ox^r)}\right\|^2\right]\\
        =&\mathbb{E}\Big\|\hbz_t^r- {P_{\teta g}(\obx^r)}-\eta \big(\nbf\left(\bz_t^r ; \mB_{t}^r \right)-\nabla \mathbf{f}\left(P_{\teta g}(\obx^r)\right) + {\nabla \mathbf{f}\left(P_{\teta g}(\obx^r) \right)} {+\frac{1}{\tau} \sum_{t=0}^{\tau-1} \overline{\nbf}\left(\bz_t^{{r-1}} ; \mB_{t}^{r-1}\right)- \frac{1}{\tau} \sum_{t=0}^{\tau-1} \nbf \left(\bz_t^{{r-1}} ; \mB_{t}^{r-1}\right)  \big)} \Big\|^2 \\
        {\leq}&\left(1+\frac{1}{\tau-1}\right) \mathbb{E}\left\|\hbz_t^r- {P_{\teta g}(\obx^r)}\right\|^2+\frac{{2 }\tau\eta^2 n}{\tau}\frac{\sigma^2}{b}+{2 }\tau \mathbb{E}\left\|\eta \nabla \mathbf{f}\left(\bz_t^r \right)-\eta\nabla \mathbf{f}\left(P_{\teta g}(\obx^r)\right) +\frac{{\Lambda^r}}{\tau} \right\|^2 \\
        {\leq}&\left(1+\frac{1}{\tau-1}\right) \bE\left\|\hbz_t^r- {P_{\teta g}(\obx^r)}\right\|^2+ {2}\eta^2n \frac{\sigma^2}{b}+ {4} \eta^2 \tau \mathbb{E}\left\| \nabla \mathbf{f}\left(\bz_t^r \right)-\nabla \mathbf{f}\left(P_{\teta g}(\obx^r)\right)\right\|^2+  \frac{{4}}{\tau}\bE\left\|\Lambda^r\right\|^2\\
        \leq&\left(\!1+\frac{5 / 4}{\tau\!-\!1}\!\right) \mathbb{E}\left\|\hbz_t^r\!-\! {P_{\teta g}(\obx^r)} \right\|^2\!+\!\frac{ {4}\bE\left\|\Lambda^r\right\|^2 }{\tau }+ {2}\!\eta^2 n \frac{\sigma^2}{b}+\frac{\teta^2}{4\tau\eta_g^2} nB_g^2,
    \end{aligned}
\end{equation}
where we use $(a_1+a_2)^2\le (1/\theta) a_1^2 + (1/(1-\theta)) a_2^2$  with $\theta=1-1/\tau$ in the first inequality  and use {$\eta \leq \frac{1}{{4\sqrt{2}} L \tau }$ } and 
\begin{equation}\label{eq-Bg}
    \begin{aligned}
        &{4} \eta^2 \tau \left\| \nabla \mathbf{f}\left(\bz_t^r \right)-\nabla \mathbf{f}\left(P_{\teta g}(\obx^r)\right)\right\|^2\\
        \le	&{4} \eta^2 \tau L^2 \left\|\bz_t^r- {P_{\teta g}(\obx^r)}\right\|^2 \\
        =&{4} \eta^2 \tau  L^2(2\|  {P_{t\eta g}(\hbz_t^r)} -\hbz_t^r \|^2+2\| \hbz_t^r - {P_{t \eta g}(\obx^r)}\|^2)\\
        \le&{4} \eta^2 \tau  L^2\left( 2 {t^2 \eta^2} \underbrace{\|\widetilde{\nabla }g( P_{t\eta g}(\hbz_t^r)
            )\|^2}_{\le n B_g^2} + 2\left\|\hbz_t^r- {P_{\teta g}(\obx^r)}\right\|^2\right)\\
        \le& \frac{1}{4\tau}  \left\|\hbz_t^r- {P_{\teta g}(\obx^r)}\right\|^2+ \frac{1}{4\tau}   {\frac{\teta^2}{\eta_g^2}} nB_g^2
    \end{aligned}
\end{equation}
in the last inequality. Telescoping  \eqref{eq-phi-P1}, we have
\begin{equation}\label{eq:phi-31}
    \begin{aligned}
        &\mathbb{E}\left[ \sum_{i=1}^n\left\|\hz_{i, t+1}^r- {P_{\teta g}(\ox^r)}\right\|^2\right]\\
        \leq &\left(\frac{ {4}}{\tau} \mathbb{E}\left\|\Lambda^r\right\|^2+{ {2}\eta^2 n} \frac{\sigma^2}{b}+\frac{\teta^2}{4\tau  {\eta_g^2}}n B_g^2\right) \sum_{\ell=0}^{t}\left(1+\frac{5 / 4}{(\tau-1)}\right)^{\ell} \\
        \leq& \left(\frac{ {4}}{\tau} \mathbb{E}\left\|\Lambda^r\right\|^2+{{2}\eta^2 n} \frac{\sigma^2}{b}+\frac{\teta^2}{4\tau {\eta_g^2}}n B_g^2\right) \sum_{\ell=0}^{t} \exp \left(\frac{(5 / 4)\ell}{\tau-1}\right) \\
        \le& {16}\mathbb{E}\left\|\Lambda^r\right\|^2+ {8}\eta^2 \tau n \frac{\sigma^2}{b}+ {\frac{\teta^2}{\eta_g^2}}nB_g^2,
    \end{aligned}
\end{equation}
where  we use $\sum_{\ell=0}^{t}\left(1+\frac{5/4}{\tau-1}\right)^\ell \leq\sum_{\ell=0}^{t} \exp \left(\frac{5/4 \ell}{\tau-1}\right) \leq\sum_{\ell=0}^{t} \exp (5/4) \le 4\tau$ for $\ell \leq \tau-1$. 
By summing \eqref{eq:phi-31} over  $t=0,\ldots,\tau-1$, we obtain 
\begin{equation}\label{eq:phi-32}
    \begin{aligned}
        &{\mathbb{E}\left[\sum_{t=0}^{\tau-1} \sum_{i=1}^n\left\|\hz_{i, t}^r- {P_{\teta g}(\ox^r)}\right\|^2\right]}\\
        \leq& \tau \left(  {16}\mathbb{E}\left\|\Lambda^r\right\|^2+ {8}\eta^2 \tau n \frac{\sigma^2}{b}+\frac{\teta^2}{\eta_g^2}nB_g^2\right) \\
        \le	&
        { {32} \tau\mathbb{E}\left\|\Lambda^r-\overline{\Lambda}^r\right\|^2 }+{32} \eta^2\tau^3 n  \mathbb{E}\left\| {\nabla f}(P_{\teta g}(\ox^r)) \right\|^2+{8} \eta^2 \tau^2 n\frac{\sigma^2}{b}+n\tau  {\frac{\teta^2}{\eta_g^2}} B_g^2, 
    \end{aligned}
\end{equation}
where we  use $\overline{\Lambda}^r= \eta\tau \overline{\nabla \mathbf{f}}\left(P_{\teta g}(\obx^r)\right)$ in the last inequality. 
This completes the proof of Lemma \ref{lem:phi-bar-x}.
\end{proof}
By substituting  \eqref{eq:phi--29}  into  \eqref{eq:Lambda-19}, we obtain 
\begin{equation}\label{eq:Lambda-34}
\begin{aligned}
    &\frac{1}{n}\bE\left\|\Lambda^{r+1}-\overline{\Lambda}^{r+1}\right\|^2-2\eta^2\tau^2 L^2 n \frac{1}{n}  {\bE\left\|P_{\teta g}(\ox^{r+1})-P_{\teta g}(\ox^r)\right\|^2}\\
    \le& \frac{1}{n}\cdot4 \eta^2 \tau L^2n\tau   \frac{1}{n\tau}{\sum_{t=0}^{\tau-1} \sum_{i=1}^n \mathbb{E}\underbrace{\left\|z_{i, t}^r-P_{\teta g}(\ox^r)\right\|^2}_{\le 2\| \hz_{i, t}^r- {P_{\teta g}(\ox^r)} \|^2+2 {t^2 \eta^2} B_g^2 }}
 + \frac{1}{n} {4} \eta^2  n^2\tau^2  \frac{\sigma^2}{n\tau b}\\
    \le&\frac{1}{n} {2\cdot4 \eta^2 \tau L^2n\tau} \frac{1}{n\tau} \Bigg({ {32} \tau\mathbb{E}\left\|\Lambda^r-\overline{\Lambda}^r\right\|^2 }+{32} \eta^2\tau^3 n  \mathbb{E}\left\| {\nabla f}(P_{\teta g}(\ox^r)) \right\|^2+{8} \eta^2 \tau^2 n\frac{\sigma^2}{b}+{\tau  {\frac{\teta^2}{\eta_g^2}} nB_g^2 {+ t^2 \eta^2 B_g^2}}\Bigg)\\
    &+ \frac{1}{n} {4} \eta^2  n^2\tau^2  \frac{\sigma^2}{n\tau b}.\\
\end{aligned}
\end{equation}

Similarly, substituting \eqref{eq:phi--29}  into {\eqref{eq:x-bar-x*-24}},  we have 
\begin{equation}\label{eq:x-x*-35}
\begin{aligned}
    &\mathbb{E}\left\|P_{\teta g}(\ox^{r+1})-x^{\star}\right\|^2- {2\teta\left( \frac{L}{2}- \frac{1}{4\teta}\right)\bE\|P_{\teta g}(\ox^{r+1})-P_{\teta g}(\ox^{r})\|^2 } \\
    \le&	\left(1-\frac{\mu\teta}{2} \right) \bE\|P_{\teta g}(\ox^{r})-x^{\star} \|^2
    +\left(4\teta^2 L^2 +\frac{2\teta L^2}{\mu}\right) \frac{1}{n\tau} \sum_{i=1}^{n} \sum_{t=0}^{\tau-1}\underbrace{ {\|z_{i, t}^r-P_{\teta g}(\ox^r)\|^2}  }_{\le 2\| \hz_{i, t}^r- {P_{\teta g}(\ox^r)} \|^2+2 {t^2 \eta^2} B_g^2 }+ 4\teta^2 \frac{1}{n\tau} \frac{\sigma^2}{b}\\
    \le& \left(1-\frac{\mu\teta}{2} \right) \|P_{\teta g}(\ox^{r})-x^{\star} \|^2 + 4\teta^2 \frac{1}{n\tau} \frac{\sigma^2}{b}\\
    &+ \left(4\teta^2 L^2 +\frac{2\teta L^2}{\mu}\right)\frac{2}{n\tau}\left({ {32} \tau\mathbb{E}\left\|\Lambda^r-\overline{\Lambda}^r\right\|^2 }+{32} \eta^2\tau^3 n  \mathbb{E}\left\| {\nabla f}(P_{\teta g}(\ox^r)) \right\|^2+{8} \eta^2 \tau^2 n\frac{\sigma^2}{b}+(n\tau+1)  {\frac{\teta^2}{\eta_g^2}} B_g^2 \right).\\
\end{aligned}
\end{equation} 
Noting that $2\teta\left( \frac{L}{2}- \frac{1}{4\teta}\right) +2\eta^2\tau^2 L^2< 0   $ if $\eta \le \frac{1}{4\tau \eta_g L}$ and $\eta_g=\sqrt{n}$, summing \eqref{eq:Lambda-34} and \eqref{eq:x-x*-35} yields 
\begin{equation}\label{eqn:Lypnv}
\begin{aligned}
    &\bE[\Omega^{r+1}] \\
\le & \left(1-\frac{\mu\teta}{2}\right) \bE\left\|P_{\teta g}(\ox^r)-x^{\star}\right\|^2+ \underbrace{\left( 2\cdot4 \eta^2 \tau L^2n\tau  \frac{1}{n\tau}  32 \tau + \left(4\teta^2 L^2 +\frac{2\teta L^2}{\mu}\right) 64   \right) }_{\le 1-\frac{\mu\teta}{2} ~ if ~ { \teta\le \frac{\mu}{150 L^2} }}  \frac{1}{n}\mathbb{E}\left\|\Lambda^r\!-\!\overline{\Lambda}^r\right\|^2  + 6\teta^2  \frac{\sigma^2}{n\tau b}+4\teta^2 \frac{1}{n\tau} \frac{\sigma^2}{b}\\
    &+\left( \underbrace{\left( 2\cdot4 \eta^2 \tau L^2n\tau\frac{1}{n} \frac{1}{n\tau} \cdot 32  \eta^2\tau^3 n \right)}_{\le 256 \teta^4 \frac{L^2}{n^2}} + \underbrace{\left(4\teta^2 L^2 +\frac{2\teta L^2}{\mu}\right) \frac{2}{n\tau}{32} \eta^2\tau^3 n }_{\le \frac{2\teta^2}{n}} \right) \mathbb{E} \underbrace{\left\| {\nabla f}(P_{\teta g}(\ox^r)) \right\|^2}_{\le 2\left\| {\nabla f}(P_{\teta g}(\ox^r)) - {\nabla f}(x^{\star})\right\|^2+2\left\|  {\nabla f}(x^{\star}) \right\|^2  }\\
&+\left(\underbrace{\left(4\teta^2 L^2 +\frac{2\teta L^2}{\mu}\right)\frac{2}{n\tau}+\frac{8\teta^2L^2}{n^2\tau } }_{\le \frac{1}{10n\tau}} \right)\cdot (\tau n+1)\frac{\teta^2}{\eta_g^2}B_g^2 \\
    \le& \left(1-\frac{\mu \teta}{2}+ {\frac{6\teta^2}{n}L^2 } \right)\bE\|P_{\teta g}(\ox^r)-x^{\star}\|^2+ \left(1-\frac{\mu\teta}{2}\right)\frac{1}{n}\mathbb{E}\left\|\Lambda^r\!-\!\overline{\Lambda}^r\right\|^2 +10\teta^2  \frac{\sigma^2}{n\tau b}+ \frac{7\teta^2}{n} B_g^2\\
    \le& \left(1-\frac{\mu\teta}{3}\right)\bE[\Omega^{r}] +10\teta^2  \frac{\sigma^2}{n\tau b}+ \frac{7\teta^2}{n} B_g^2, 
\end{aligned}
\end{equation}
where we use \eqref{eq:stepsize},  $(\eta \tau L)^2\le \frac{1}{150^2}\frac{\mu^2}{L^2}\frac{1}{\eta_g^2} $, $\teta^2 L^2  \le \frac{1}{150^2}\frac{\mu^2}{L^2} $, $\frac{\teta L^2}{\mu }\le \frac{1}{150}$, and ${\mu\teta }\le \frac{1}{150}\frac{\mu^2}{L^2}$.  
Since $\nabla f(x^{\star})+\widetilde{\nabla} g(x^{\star}) =0 $, we have that $\|\nabla f(x^{\star})\|\le B_g$. 
After telescoping the above inequality, we complete the proof of Theorem \ref{thm: g}.  
\subsection{Proof of Corollary \ref{coro-Ic}}
\begin{proof}
In the special case when $g=I_{\mC}$, if $\eta \le \frac{1}{4L\tau}$, we have 
\begin{equation}
    \begin{aligned}
        &{4} \eta^2 \tau\left\| \nabla \mathbf{f}\left(\bz_t^r \right)-\nabla \mathbf{f}\left(P_{\teta g}(\obx^r)\right)\right\|^2\\
        =&{4} \eta^2 \tau\|\nabla \mathbf{f} (P_{t\eta  g}(\hbz_t^r)) -\nabla \mathbf{f} (P_{t\eta g }(P_{\teta g}(\obx^r)))\|^2\\
        \le&\frac{1}{4\tau}\|\hbz_t^r -P_{\teta g}(\obx^r) \|^2,
    \end{aligned}
\end{equation}
where we use the fact that $P_{\teta g}(\obx^r)=P_{t\eta g}(P_{\teta g}(\obx^r)) $ when $g=I_{\mathcal{C}}$   in the equality. 
Thus, \eqref{eq-phi-P1} becomes 
\begin{equation}
    \begin{aligned}
        &\mathbb{E}\left[ \sum_{i=1}^n\left\|\hz_{i, t+1}^r- {P_{\teta g}(\ox^r)}\right\|^2\right]       \leq\left(\!1+\frac{5 / 4}{\tau\!-\!1}\!\right) \mathbb{E}\left\|\hbz_t^r\!-\! {P_{\teta g}(\obx^r)} \right\|^2\!+\!\frac{ {4}\bE\left\|\Lambda^r\right\|^2 }{\tau }\!+ {2}\!\eta^2 n \frac{\sigma^2}{b}
    \end{aligned}
\end{equation}
and \eqref{eq:phi--29} becomes 
\begin{equation}\label{eq:phi--29-IC}
    \begin{aligned}
              &{\mathbb{E}\left[\sum_{t=0}^{\tau-1} \sum_{i=1}^n\left\|\hz_{i, t}^r- {P_{\teta g}(\ox^r)}\right\|^2\right]}
        \le	
        { {32} \tau\mathbb{E}\left\|\Lambda^r-\overline{\Lambda}^r\right\|^2 }+{32} \eta^2\tau^3 n  \mathbb{E}\left\| {\nabla f}(P_{\teta g}(\ox^r)) \right\|^2+{8} \eta^2 \tau^2 n\frac{\sigma^2}{b}.
    \end{aligned}
\end{equation}
In addition, by $\left\|z_{i, t}^r-P_{\teta g}(\ox^r)\right\|^2=\left\|P_{t\eta g}(\hz_{i, t}^r)-P_{t\eta g}(P_{\teta g}(\ox^r))\right\|^2\le \| \hz_{i, t}^r- {P_{\teta g}(\ox^r)} \|^2$, 
\eqref{eq:Lambda-34} becomes 
\begin{equation}\label{eq:Lambda-34-IC}
\begin{aligned}
    &\frac{1}{n}\bE\left\|\Lambda^{r+1}-\overline{\Lambda}^{r+1}\right\|^2-2\eta^2\tau^2 L^2 n \frac{1}{n}  {\bE\left\|P_{\teta g}(\ox^{r+1})-P_{\teta g}(\ox^r)\right\|^2}\\
    \le&\frac{1}{n} {2\cdot4 \eta^2 \tau L^2n\tau} \frac{1}{n\tau} \Bigg({ {32} \tau\mathbb{E}\left\|\Lambda^r-\overline{\Lambda}^r\right\|^2 }+{32} \eta^2\tau^3 n  \mathbb{E}\left\| {\nabla f}(P_{\teta g}(\ox^r)) \right\|^2+{8} \eta^2 \tau^2 n\frac{\sigma^2}{b}\Bigg)  + \frac{1}{n} {4} \eta^2  n^2\tau^2  \frac{\sigma^2}{n\tau b}
\end{aligned}
\end{equation}
and \eqref{eq:x-x*-35} becomes
\begin{equation}\label{eq:x-x*-35-IC}
\begin{aligned}
    &\mathbb{E}\left\|P_{\teta g}(\ox^{r+1})-x^{\star}\right\|^2- {2\teta\left( \frac{L}{2}- \frac{1}{4\teta}\right)\bE\|P_{\teta g}(\ox^{r+1})-P_{\teta g}(\ox^{r})\|^2 } \\
    \le& \left(1-\frac{\mu\teta}{2} \right) \bE\|P_{\teta g}(\ox^{r})-x^{\star} \|^2 + 4\teta^2 \frac{1}{n\tau} \frac{\sigma^2}{b}\\
    &+ \left(4\teta^2 L^2 +\frac{2\teta L^2}{\mu}\right)\frac{2}{n\tau}\left({ {32} \tau\mathbb{E}\left\|\Lambda^r-\overline{\Lambda}^r\right\|^2 }+{32} \eta^2\tau^3 n  \mathbb{E}\left\| {\nabla f}(P_{\teta g}(\ox^r)) \right\|^2+{8} \eta^2 \tau^2 n\frac{\sigma^2}{b} \right).
\end{aligned}
\end{equation} 

Combing \eqref{eq:Lambda-34-IC} and \eqref{eq:x-x*-35-IC}, and  using $\nabla f(x^{\star})=0 $ given in Assumption \ref{asmp:Ic}, \eqref{eqn:Lypnv} becomes 
\begin{equation}
    \begin{aligned}
        \bE[\Omega^{r+1}]    \le& \left(1-\frac{\mu\teta}{3}\right)\bE[\Omega^{r}] +10\teta^2  \frac{\sigma^2}{n\tau b}.  
    \end{aligned}
\end{equation}
Thus, we complete the proof of Corollary \ref{coro-Ic}. 
\end{proof}
\end{document}